\tikzstyle{call} = [->, line width=1mm, lightgray]
\tikzstyle{callsequencelabel} = [black, font=\footnotesize]
\definecolor{dkgreen}{rgb}{0,.3,0}
\definecolor{dkblue}{rgb}{0,0,.3}
\definecolor{lightgrey}{gray}{0.4}
\definecolor{gray}{gray}{0.50}
\newproof{proofWithManualQED}{Proof}{}
\newtheorem{theorem}{Theorem}
\newtheorem{proposition}[theorem]{Proposition}
\newtheorem{lemma}[theorem]{Lemma}
\newtheorem{corollary}[theorem]{Corollary}
\newtheorem{definition}[theorem]{Definition}
\newtheorem{example}[theorem]{Example}
\newtheorem{observation}[theorem]{Observation}
\newcommand{\LNS}{\mathsf{LNS}}
\newcommand{\ANY}{\mathsf{ANY}}
\newcommand{\prot}{\mathsf{P}}
\newcommand{\CMO}{\mathsf{CMO}}
\newcommand{\PIG}{\mathsf{PIG}}
\renewcommand{\phi}{\varphi}
\newcommand{\asynch}{\sim}
\newcommand{\synch}{\approx}
\newcommand{\Exp}{\mathit{Exp}}
\newcommand{\eq}{\leftrightarrow}
\newcommand{\imp}{\rightarrow}
\newcommand{\et}{\wedge}
\newcommand{\vel}{\vee}
\newcommand{\Et}{\bigwedge}
\newcommand{\Vel}{\bigvee}
\newcommand{\is}{\exists}
\newcommand{\M}{\hat{K}}
\renewcommand{\phi}{\varphi}
\newcommand{\union}{\cup}
\newcommand{\Union}{\bigcup}
\newcommand{\call}[1]{\stackrel{#1}{\to}}
\newcommand{\lang}{\mathcal{L}}
\newcommand{\lbr}{[\![}
\newcommand{\rbr}{]\!]}
\newcommand{\I}[1]{\lbr #1 \rbr}
\newcommand{\Naturals}{\mathbb{N}}
\newcommand{\bigO}{\mathcal{O}}
\newcommand{\Skip}{\ensuremath{\mathit{skip}}}
\begin{document}

\title{Everyone knows that everyone knows: gossip protocols for super experts}
\author{Hans van Ditmarsch\thanks{Open University of the Netherlands; {\tt hans.vanditmarsch@ou.nl}}
  \and Malvin Gattinger\thanks{University of Amsterdam, the Netherlands; {\tt malvin@w4eg.eu}}
  \and Rahim Ramezanian\thanks{Shomara LLC, Tehran, Iran; {\tt rahim.ramezanian@gmail.com}}
}
\maketitle

\begin{abstract}
 A gossip protocol is a procedure for sharing secrets in a network. 
 The basic action in a gossip protocol is a pairwise message exchange (telephone call) wherein the calling agents exchange all the secrets they know. 
 An agent who knows all secrets is an expert. 
 The usual termination condition is that all agents are experts. 
 Instead, we explore protocols wherein the termination condition is that all agents know that all agents are experts. 
 We call such agents super experts. 
 We also investigate gossip protocols that are common knowledge among the agents. 
 Additionally, we model that agents who are super experts do not make and do not answer calls, and that this is common knowledge. 
 We investigate conditions under which protocols terminate, both in the synchronous case, where there is a global clock, and in the asynchronous case, where there is not. 
 We show that a commonly known protocol with engaged agents may terminate faster than the same commonly known protocol without engaged agents. 
\end{abstract}

\section{Introduction}\label{section:introduction}

The gossip problem addresses how to spread secrets among a group of agents by pairwise message exchanges, traditionally named telephone calls. 
We assume that each agent holds a single secret, and that when calling each other the agents exchange all the secrets they know. An agent may call another agent if it has that agent's telephone number. It is typically assumed that the goal of the information dissemination is that all agents know all secrets. 
The situation can be represented by a graph or network where the nodes are the agents and where, when two nodes are linked, the agents can call each other. They are then often called neighbours.

\paragraph*{Survey of related work}
There are many variations of the problem. 
It goes back to the early 1970s~\cite{BAKER1972191,tijdeman:1971,knoedel:1975,boydSteele1979,west82a}. 
In this classic setting (for an overview, see~\cite{hedetniemietal:1988}) only secrets are exchanged, and the focus is on minimum execution length of protocols executed by a central scheduler.
Variations (described in depth in~\cite{hedetniemietal:1988}) involve distinguishing message exchange (push-pull), from only the caller informing the person called (push), and from only the person called informing the caller (pull). Here, we only consider exchange.
Parallel calls, but where an agent cannot be involved in two parallel calls at the same time, were considered in~\cite{knoedel:1975}. We only consider sequential calls in this paper.
Different network topologies restrict what agents are neighbours: trees, lines, circles, etcetera. We only consider that all agents can call each other.

Later publications assume that the scheduling is \emph{distributed}~\cite{kermarrecetal:2007,eugster}, which means that agents calling each other are not assigned to do so by the central scheduler, but operate autonomously. The two scheduling approaches are related: in distributed gossip the role of the central scheduler is restricted to a random move of nature, determining which agent is next to call. In the traditional gossip community distributed gossip focusses on rounds of calls, where all agents simultaneous place a call --- this requires a mechanism for agents to receive multiple incoming calls from different callers~\cite{kermarrecetal:2007}. In the more recent modal logical gossip community an elegant connection between distributed and non-distributed scheduling was first proposed in~\cite{apt-tark}. With distributed protocols and random scheduling came the issue of the expected termination of gossip protocols, where nearly all approaches concur in finding the Coupon Collector's complexity (or related) of $O(n\log n)$~\cite{boydSteele1979,haigh81,moon72,DitmarschKS17}, for $n$ agents, exceptions granted for networks with restricted neighbour relations where $O(n \log^2 n)$ has been obtained\cite{Haeupler15}.

Fairly recent developments focus on gossip protocols with \emph{epistemic} preconditions for calls~\cite{apt-tark,attamahetal.ecai:2014,Attamah2017,AptW17,AptW18,CooperHMMR19,abs-1907-09097,logicofgossiping:2020}. 
For example, an agent may only call another agent once, or only if she does not know the other agent's secret. Epistemic preconditions formalize what agents know about secrets. To determine what an agent knows, it is required to reason about indistinguishable call sequences. Novel issues then come into play, such as whether callers only know what set of secrets they hold after a call (merge-then-inspect) or whether they also know what the other caller contributed to that output (inspect-then-merge), which is more informative~\cite{apt-tark,attamahetal.ecai:2014,DitmarschEPRS17,AptGH18}.  Instead of preconditions involving individual knowledge one can also consider preconditions involving common knowledge between the two agents involved in a call~\cite{AptW17,abs-1907-09097}. The presence of a global clock (synchronous gossip) provides much more information --- and thus knowledge --- than when it is absent, as in truly distributed systems (asynchronous gossip)~\cite{apt-tark,AptW18,DitmarschEPRS17}. In our setting we will also derive very different results for synchrony and for asynchrony.

In \emph{dynamic} gossip~\cite{DitmarschEPRS17,hvdetal.dynamicgossip:2019} the agents do not only exchange all the secrets they know but also all the neighbours (telephone numbers) they know.
This results in network expansion: not only the secret relation but also the number relation is expanded after a call.
The network is then dynamic, which explains the term.
If the number relation is a complete digraph (the universal relation), i.e., when all agents know all telephone numbers, then the dynamic and classic gossip problem coincide.
We assume complete digraphs and thus do not investigate dynamic gossip.

Another way in which gossip protocols can be epistemic is when they strive to realize higher-order epistemic goals. Such protocols were investigated in~\cite{HerzigM17,CooperHMMR19}. They show that arbitrarily higher-order mutual knowledge of all secrets can be obtained when the callers are permitted not merely to exchange secrets but also \emph{knowledge about secrets}. In their approach, primarily, in a call the two agents may exchange all the secrets they know.
But once this is done, they may also exchange the information `everyone knows all the secrets'. This requires that the number of agents is known.  
And once \emph{that} is done, they may exchange the information `everyone knows that everyone knows all the secrets', and so on. They thus achieve higher-order mutual knowledge of all secrets (all the agents know that all the agents know, etc.). Surprisingly, arbitrary higher-order goals cannot be reached if agents are only allowed to exchange secrets, assuming that agents can only observe their own calls~\cite{DitmarschG22}: `everyone knows that everyone knows all the secrets' is all you can get. This makes gossip protocols with that epistemic goal a particularly interesting target for analysis.

\paragraph*{Our contribution}
In this contribution we investigate gossip protocols with the epistemic goal that all agents know that all agents know all secrets. Clearly, this assumes that the agents know how many (other) agents there are.
\begin{itemize}
\item The protocol terminates when \emph{everyone knows that everyone knows all secrets}. 
\end{itemize}
However, we continue to assume that agents only exchange the same basic information as in the classic gossip problem, i.e.\ only secrets. So, unlike~\cite{HerzigM17} we do not achieve the epistemic goal by loading the messages with epistemic features. The agents may also have knowledge of the protocol, or of the behaviour of other agents. We consider various such modifications, and will investigate how making such assumptions affect properties such as termination and execution length.
\begin{itemize}
\item Agents know what gossip protocol is used by all agents.
\item Agents who know that everyone knows all secrets \emph{no longer make calls}. 
\item Agents who know that everyone knows all secrets \emph{no longer answer calls}. 
\end{itemize}
An agent who knows all secrets is called an \emph{expert}, as usual. We call an agent who knows that everyone is an expert a \emph{super expert}. So our epistemic goal is for all agents to become super experts, where we will also investigate the effect of additional assumptions such as knowledge of the protocol and that super experts no longer make and answer calls.

\paragraph*{Examples}
In the remainder of this introductory section we give examples to motivate our approach and we outline our results. 

\medskip

Let there be four agents $a,b,c,d$. Each agent holds a single secret to share.
Consider the call sequence $ab;cd;ac;bd$. In a call, agents exchange all secrets they know. After the call $ab$, agents $a$ and $b$ both know two secrets, and similarly after the call $cd$, agents $c$ and $d$ both know two secrets. Therefore, after the subsequent call $ac$, agents $a$ and $c$ both know all four secrets: they are experts. 
Similarly, after the final call $bd$, $b$ and $d$ are experts. So, after $ab;cd;ac;bd$, all agents are experts. See Table~\ref{tab:ab;cd;ac;bd}.

\begin{table}
  \centering
  \begin{tabular}{l|l@{}l@{}l@{}l l@{}l@{}l@{}l l@{}l@{}l@{}l l@{}l@{}l@{}l}
    \           & a& & &  &  &b& &  &  & &c&  &  & & &d \\
    $\call{ab}$ & a&b& &  & a&b& &  &  & &c&  &  & & &d \\
    $\call{cd}$ & a&b& &  & a&b& &  &  & &c&d &  & &c&d \\
    $\call{ac}$ & a&b&c&d & a&b& &  & a&b&c&d &  & &c&d \\
    $\call{bd}$ & a&b&c&d & a&b&c&d & a&b&c&d & a&b&c&d \\
  \end{tabular}
  \caption{Results of the call sequence $ab;cd;ac;bd$.}\label{tab:ab;cd;ac;bd}
\end{table}

In fact, the agents know a bit more than that. 
After call $ac$ agent $a$ is not only herself an expert but she also knows that agent $c$ is an expert, and agent $c$ also knows that agent $a$ is an expert. 
(We typically use female pronouns to refer to $a$, male pronouns to refer to $b$, female pronouns to refer to $c$, and so on.) Similarly, after call $bd$, agent $b$ also knows that $d$ is an expert, and $d$ also knows that $b$ is an expert. 
Can the agents continue calling each other until they all know that they are all experts, i.e., until they all know that they all know all secrets? Yes, they can.

Let us first consider agent $a$. In order to get to know that everyone knows all secrets, $a$ has to make two further calls: $ab$ and $ad$. 
Let us suppose these calls are made, and in that order, i.e.\ consider the whole sequence $ab;cd;ac;bd;ab;ad$.
First, note that before and after those calls the agents involved are already experts, so no factual information is exchanged. However, the agents still learn about each other that they are experts. Hence, after $ab$, agent $a$ knows that $b$ is an expert and after $ad$ she knows that $d$ is an expert. 
As she also knows this about herself, $a$ therefore now knows that everyone is an expert. She has become a super expert.

Let us now consider agent $b$. In call $bd$ he learnt that $d$ is an expert, and in the additional call $ab$ he learnt that $a$ is an expert. 
And again he obviously knows about himself that he is an expert. 
Therefore, in order to get to know that everyone is an expert, $b$ only needs to make one additional call, $bc$, and $b$ then is a super expert. 

We now consider agent $c$. Similarly, after yet another call $cd$, $c$ is a super expert, which can be observed by highlighting the calls wherein $c$ learns that another agent is an expert, as follows: $ab;cd;\pmb{ac};bd;ab;ad;\pmb{bc};\pmb{cd}$. 
We killed two birds with one stone, because after that final call $cd$ also agent $d$ knows that all agents are experts: $ab;cd;ac;\pmb{bd};ab;\pmb{ad};bc;\pmb{cd}$. 

Therefore, all agents are super experts after the call sequence \[ ab;cd;ac;bd;ab;ad;bc;cd.\] 

\paragraph{Motivation for missed call semantics} We now motivate our modifications of the usual call rules in gossip. We call an agent who no longer makes calls and no longer answers calls an \emph{engaged} agent, and a call that is not answered we name a \emph{missed call}.  As a first idea, suppose that agents become engaged once they are experts (so not super experts). 
Given this, can everyone still become an expert? Yes. 
For example, after the already mentioned call sequence $ab;cd;ac;bd$ all agents are experts, and all calls were answered. 
However, now consider the sequence $ab;ac;ad$. 
After this, agents $a$ and $d$ are experts. 
Agents $b$ and $c$ can now no longer become experts: if either were to call $a$ or $d$, this would be a missed call. 
Note that agents do not learn any secrets from a missed call. 
Hence in this case $b$ and $c$ can never learn the secret of $d$: they can still call each other, and after additional call $bc$ or $cb$ agents $b$ and $c$ would both know three secrets but not all four secrets, hence they are not experts. 
The protocol cannot terminate. 

We could additionally assume common knowledge among the agents that a missed call means that the agent not answering the call is an expert. But that does not make a big difference. After a missed call as above agents $b$ and $c$ would thus know that $a$ and $d$ are experts. But, for example, that agent $b$ knows that $a$ knows the secret of $d$, does not make $b$ himself know the secret of $d$. They cannot use that knowledge to become experts themselves. With the classic gossip goal wherein all agents become experts the presence of engaged agents prevents termination even for very simple protocols. We conclude that this first idea of a condition for missed calls is not very satisfactory. 

In this contribution we therefore employ the idea of missed calls in a different way. Let us now suppose that the goal of the protocol is for all agents to become super experts, and that an agent \emph{who is a super expert} no longer makes calls and no longer answers calls. This requirement is harder to fulfil than the previous requirement that an agent \emph{who is an expert} stops making and answering calls. 

We can already satisfy the stronger termination requirement that all are super experts without such missed calls, for example, with the sequence $ab;cd;ac;bd;ab;ad;bc;cd$ above. This is not entirely obvious. However, observe that after the subsequence $ab;cd;ac;bd;ab;ad$ \emph{only} agent $a$ knows that everyone is an expert, and in the subsequent call $bc$ \emph{only} agent $b$ learns that, and \emph{only} in the final call $cd$ agents $c$ and $d$ simultaneously learn that. No call is made to a super expert. Therefore, there are no missed calls.

However, now consider the call sequence $ab;cd;ac;bd;ab;ad;ba;ca;da$ with this missed call semantics.  
All final three calls are missed calls, because $a$ already knows that everyone is an expert. 
What do $b$, $c$, and $d$ respectively learn from these calls?
Well, nothing whatsoever, as just like above we did not make any assumptions so far about the meaning of a missed call in this new context. 
Therefore, after those calls we can still make the additional calls $bc;cd$ in order to satisfy that everyone knows that everyone is an expert. 

Let us now, as above, additionally assume that it is common knowledge among the agents that a missed call means that the agent not answering the call is a super expert. 
Now, unlike above, that makes a big difference. 
Given the sequence $ab;cd;ac;bd;ab;ad;ba;ca;da$, in the three final missed calls $ba$, $ca$, and $da$, respectively, agents $b$, $c$, $d$ then learn from $a$ that all agents are experts, so that after the entire sequence all agents know that all agents are experts. 
Again, we are done. 

Before we continue, let us make three more observations. 
Firstly, if the three missed calls had been ordinary calls, the termination condition would not yet have been met. 
For example, agent $d$ would then not know that agent $c$ knows all secrets. 
Additional calls would have been needed. 
Secondly, although the sequence with three missed calls is one call longer than the previous sequence that also realizes the knowledge objective, in general there are terminating sequences with missed calls that are shorter than any other terminating sequence without missed calls, as we will prove later. Thirdly, as in a missed call the agent calling must already be an expert (otherwise the agent called cannot be a super expert), no factual information would have been exchanged if that call had been an ordinary call. So the presence of missed calls does not prevent agents from becoming experts in the first place, which would have wrecked our chances to reach the protocol goal.

The modelling solution for missed calls, that is novel, is similar to a modelling solution for making protocols common knowledge, presented in~\cite{DitmarschGKP19}. 
We incorporated both in this contribution. 
This also allows us to investigate how we can achieve that all agents are super experts with the constraints of some protocols known from the literature, such as the protocol $\CMO$ wherein each pair of agents can only call each other once (either $ab$ or $ba$ is allowed, but not both)~\cite{hvdetal.dynamicgossip:2019}. 

For example, consider again the sequence $ab;ac;ad$ after which agents $a$ and $d$ are experts. 
Agent $a$ may no longer be involved in any subsequent call according to $\CMO$. It is therefore impossible for her to get to know that everyone is an expert.  
So, common knowledge of a protocol comes with additional constraints. 
It may also come with additional advantages: in this case we can sometimes achieve common knowledge of termination under synchronous conditions, i.e., if all agents know how many calls have been made, even if they were not involved themselves in all those calls. 
We will report some such cases, in particular for $\CMO$: for example, after an extension of $ab;ac;ad$ with three more calls, all agents including $a$ are super experts. Unfortunately, if we also allow missed calls this may no longer be the case, namely when an agent who already is a super expert \emph{must} call another agent in order for all agents to become super experts. Such an extra complication can be overcome if agents have a notion of time, and if we allow a so-called \Skip\ action that merely stands for a tick of the clock. We will carefully distinguish all such modelling aspects.

\paragraph*{GoMoChe}
To find out what agents know, we need to consider all call sequences they consider possible. Such reasoning about call sequences is a non-trivial exercise. To automatically find and verify such protocol executions we used GoMoChe, the model checker for gossip protocols available at \url{https://github.com/m4lvin/GoMoChe}. Assuming synchrony, we only need to reason about finite sets of call sequences to verify knowledge. But assuming asynchrony we need to reason about infinite sets of call sequences of arbitrary finite length, which cannot be done with a model checker. However, it is often sufficient to verify ignorance, i.e., lack of knowledge, namely by producing two `witness' call sequences with opposite properties. Such witnesses can already be found for call sequences of `small' length, by reasoning about finite sets of call sequences of a certain maximal length. We also used the model checker GoMoChe for that.

It should be possible in principle to have a model checker for knowledge in the asynchronous setting as well, namely using the notion of redundant call as in~\cite{AptW18,logicofgossiping:2020}, that bounds the maximal length of a call sequence without redundant (non-informative) calls, and that therefore also makes the sets of indistinguishable call sequences (and the length of individual sequences) finite again.

\paragraph*{Applications}
We hope our results may be useful for applications involving gossip. First, because we show that by merely exchanging secrets also stronger epistemic goals may be met. In other words, the cost of messaging remains unchanged so that at the price of longer call sequences we obtain the benefit of additional epistemic goals. We thus add to the standard gossip setting the possibility of acknowledgements (that all secrets have been disseminated as intended). Second, the more involved semantics where protocols are common knowledge might help to formalize as diverse settings as: councils or groups without a vertical hierarchy informing each other of the latest developments (when it is common knowledge that all calls are made only once, and in the presence of a time-out for rounds of calls). Third, the even more involved semantics where engaged agents do not answer calls: this is a widely used phenomenon knows as \emph{missed calls} wherein exactly such signalling takes place~\cite{Donner2007:Beeping}.

\paragraph*{Outline}
Section~\ref{section:def-part} presents a logical language and semantics for gossip protocols with the epistemic goal that all agents know that all agents know all secrets. A protocol is super-successful if all executions terminate satisfying this condition. We also recall four gossip protocols from the literature: $\ANY$, $\PIG$, $\CMO$, and $\LNS$. We obtain various results for the protocols $\ANY$ and $\PIG$, mainly that they are super-successful (both for the synchronous and asynchronous versions) in a sense adequate for protocols permitting arbitrarily large call sequences. 
Section~\ref{section:ComKnowP} refines the logic with common knowledge of gossip protocols. If a protocol $\prot$ is common knowledge we call it `known $\prot$.' We then show that synchronous known $\CMO$ is super-successful. 
Section~\ref{section:engaged} adds the feature that super experts do not make calls and do not answer calls. We then show that, if this is also known, super-successful protocol executions can be shorter. However, under these conditions $\CMO$ is no longer super-successful. 
We conclude in Section~\ref{section:conclusion} with an overview of our results and ideas for future research.

In three appendices we include further details.
Appendix~A contains the proof that protocol-dependent knowledge is well-defined.
Appendix~B adds \Skip\ calls to protocol-permitted sequences and applies this to known $\CMO$ with engaged agents.
Appendix~C provides scripts of the model checker GoMoChe that verify claims in this article.

\section{Gossip protocols for super experts}\label{section:def-part}

\subsection{Syntax and semantics}

Suppose a finite set of agents $A = \{a,b,c,\dots\}$ is given. 
We assume that two agents can always call each other, i.e., a complete network connects all the agents. 
Let $S \subseteq A^2$ be a binary relation such that we read $S_x y$ (for $(x,y) \in S$) as ``agent $x$ knows the secret of agent $y$'', and where $S_x$ stands for $\{ y \in A \mid S_x y\}$.
For the identity relation $\{(x,x) \mid x \in A\}$ we write $I$. 

The agents communicate with each other through telephone calls. 
During a call between two agents $x$ and $y$, they exchange all the secrets that they knew before the call. 
So if a call takes place the binary relation $S$ may grow. 

A \emph{call} or telephone call is a pair $(x,y)$ of agents $x,y \in A$ for which we write $xy$. Agent $x$ is the \emph{caller} and agent $y$ is the \emph{callee}. Given call $xy$, call $yx$ is the \emph{dual call}. An agent $x$ is \emph{involved} in a call $yz$ iff $y=x$ or $z=x$. A \emph{call sequence} is defined by induction: the empty sequence $\epsilon$ is a call sequence. 
If $\sigma$ is a call sequence and $xy$ is a call, then $\sigma;xy$ is a call sequence. 
Let $S$ be the secret relation between agents and $\sigma$ a call sequence. 
The result of applying $\sigma$ to $S$ is defined recursively by
\[ S^\epsilon = S; \text{ and } S^{\sigma;xy} = S^\sigma \cup (\{(x,y),(y,x)\} \circ S^\sigma) \]
where $\circ$ is relational composition: $R \circ R' := \{ (x,y) \mid \is z \ (x,z) \in R_1 \& (z,y) \in R_2\}$. 

We write $|\sigma|$ to denote the length of a call sequence, $\sigma[i]$ for the $i$th call of the sequence, $\sigma|i$ for the first $i$ calls of the sequence, and $\sigma_x$ for the subsequence of $\sigma$ that only contains calls involving $x$. If $\sigma = \rho;\tau$, then $\rho$ is a \emph{prefix} of $\sigma$, denoted as $\rho\sqsubseteq\sigma$, and $\tau$ is the \emph{complement} of $\rho$ in $\sigma$, where $\tau$ is also denoted $\sigma\setminus\rho$.

For a given set of agents $A$, a \emph{gossip state} is a pair $(S,\sigma)$, where $S$ is a secret relation and $\sigma$ a call sequence. A gossip state is \emph{initial} if $S=I$ and $\sigma=\epsilon$. In this contribution we only consider gossip states of the form $(I,\sigma)$, in which case we omit $I$. Hence $\epsilon$ stands for the initial state $(I,\epsilon)$, and $ab;cd$ stands for $(I,ab;cd)$, etcetera.

\begin{definition}[Language]\label{def.lang}
For a given finite set of agents $A$ the language $\lang$ of protocol conditions is given by the following $BNF$:
\[ \begin{array}{lcl}
\phi & := & \top \mid S_a b \mid Cab \mid \neg \phi \mid (\phi \wedge \phi) \mid K_a \phi \mid [\pi]\phi \\
\pi & := & ?\phi \mid ab \mid (\pi ; \pi) \mid (\pi \union \pi) \mid \pi^*
\end{array} \]
where $a,b$ range over $A$.
We have the usual abbreviations for implication, disjunction and for dual modalities, and often omit parentheses. 
\end{definition}
The $\phi$ are called \emph{formulas} and the $\pi$ are called \emph{programs}. The atomic formula $S_a b$ reads as `agent $a$ has the secret of $b$'. The atomic formula $Cab$ means that agent $a$ has called agent $b$ (in the past). The formula $K_a \phi$ reads `agent $a$ knows that $\phi$ is true'. By abbreviation we further define $\M_a \phi := \neg K_a \neg \phi$, for `agent $a$ considers it possible that $\phi$.' We also define the abbreviation $E \phi := \Et_{a \in A} K_a \phi$ and read it as `everyone knows that $\phi$' ($E \phi$ is also known as \emph{shared knowledge} or \emph{mutual knowledge} of $\phi$). Expression $[\pi]\phi$ reads as `after executing the program $\pi$, $\phi$ is true'. The basic program $?\phi$ denotes the `test on $\phi$' and the basic program $ab$ denotes, as expected, the call $ab$. The composite programs $\pi ; \pi$, $\pi \union \pi$ and $\pi^*$ represent, respectively, sequential execution, non-deterministic choice, and arbitrary iteration. Program iteration is defined as: $\pi^0 := {?\top}$, and for $n \geq 0$, $\pi^{n+1} := \pi^n;\pi$. 

Agent $a$ is an \emph{expert} if she knows all the secrets, formally $\bigwedge_{b \in A} S_a b$, abbreviated as $\Exp_a$. 
\emph{Everyone is an expert} is represented by the formula $\Exp_A := \bigwedge_{a \in A} \bigwedge_{b \in A} S_a b$. 
Agent $a$ is a \emph{super expert} if she knows that everyone is an expert, formally $K_a \Exp_A$. 

\begin{definition}[Protocol]\label{def.prot}
A protocol $\prot$ is a program defined by
 \[ \prot := {(?\neg E\!\Exp_A; \Union_{a\neq b \in A} (?\prot_{ab};ab))}^* ; ? E\!\Exp_A \]
where $\prot_{ab} \in \lang$ is the \emph{protocol condition} for call $ab$ of protocol $\prot$. 
\end{definition}
The formula $E\!\Exp_A$ is called the \emph{epistemic goal} or the \emph{termination condition} of the protocol.  
The non-deterministic choice in the definition of gossip protocol is common for \emph{distributed} protocols without central schedulers~\cite{apt-tark,AptW18,DitmarschGKP19}.
However, the difference from the usual definitions of a gossip protocol, e.g.~\cite{DitmarschGKP19}, is that we replace the goal $\Exp_A$ with $E\!\Exp_A$. In other words, instead of ``while not everyone is an expert, select two agents to make a call'' we have ``while not everyone is a super expert, select two agents to make a call''. 

To define formally what the agents participating in a gossip protocol \emph{know}, we use the following epistemic accessibility relations.
We recall that $I^\sigma_b$ is the set $\{ a \in A \mid (b,a) \in I^\sigma\}$, where $I$ is the secret relation that is the identity and where $I^\sigma$ is the result of applying call sequence $\sigma$ to relation $I$. Hence $a \in I^\sigma_b$ means that after the call sequence $\sigma$ agent $a$ knows the secret of $b$. Moreover, $I^\sigma_b = I^\tau_b$ means that after the two call sequences $\sigma$ and $\tau$ agent $b$ knows the same secrets. 

\begin{definition}[Epistemic relation]\label{def.synch}
Let $a \in A$. The \emph{synchronous epistemic relation} $\synch_a$ is the smallest equivalence relation between call sequences such that:
\begin{itemize}
\item $\epsilon \synch_a \epsilon$
\item if $\sigma \synch_a \tau$ and 
 $a \notin \{b,c,d,e\}$,
 then
 $\sigma;bc \synch_a \tau;de$
\item if $\sigma \synch_a \tau$ and 
 $I^\sigma_b = I^\tau_b$,
 then
 $\sigma;ab \synch_a \tau;ab$
\item if $\sigma \synch_a \tau$ and 
 $I^\sigma_b = I^\tau_b$,
 then
 $\sigma;ba \synch_a \tau;ba$
\end{itemize}
The \emph{asynchronous epistemic relation} $\asynch_a$ between call sequences is defined the same as the relation $\synch_a$ except that the second clause is replaced by
\begin{itemize}
\item if $\sigma \asynch_a \tau$,
 $a \notin \{b,c\}$,
 then
 $\sigma;bc \asynch_a \tau$.
\end{itemize}
\end{definition}
Informally, the synchronous accessibility relation encodes that agents not involved in a call are still aware that a call has taken place, as considered in~\cite{apt-tark,attamahetal.ecai:2014}. 
This implies that all agents know how many calls have taken place, i.e., there is a global clock. 
The asynchronous accessibility relation does not make any such assumption. 
Then, agents are only aware of the calls in which they are involved. 
Any information on other calls has to be deduced from the secrets they obtain from their calling partners. 

Both epistemic relations assume that the callers not only learn the union of the sets of secrets they each held before the call, but they also learn what set of secrets the other agent held before the call. As discussed in the introductory survey, this is known as the ``inspect-then-merge'' form of observation~\cite{AptGH18}.

Note that for any agent $a$, ${\synch_a} \subseteq {\asynch_a}$. This is fairly obvious, because for any call sequences $\sigma$ and $\tau$ and $a \notin \{ b,c,d,e \}$: $\sigma \asynch_a \tau$ implies $\sigma;bc \asynch_a \tau$, which implies $\sigma;bc \asynch_a \tau;de$. The latter copies the clause $\sigma;bc \synch_a \tau;de$ for the synchronous case.

\begin{definition}[Semantics]\label{Sema:Def}
Let call sequence $\sigma$ and formula $\phi\in\lang$ be given. 
We define $\sigma\models\phi$ by induction on the structure of $\phi$. 
\[ \begin{array}{lcl}
\sigma \models \top             & \text{iff} & true \\
\sigma \models S_a b            & \text{iff} & b \in I^\sigma_a \\
\sigma \models Cab              & \text{iff} & ab \in \sigma                                                                         \\
\sigma \models \neg \phi        & \text{iff} & \sigma\not \models \phi                                                               \\
\sigma \models \phi \wedge \psi & \text{iff} & \sigma \models \phi \ \text{and}\ \sigma \models \psi                                 \\
\sigma \models K_a \phi   & \text{iff} & \tau \models \phi \ \text{for all}\ \tau \ \text{such that}\ \sigma\synch_a\tau \\
\sigma \models [\pi] \phi       & \text{iff} & \tau \models \phi \ \text{for all}\ \tau \ \text{such that}\ \sigma\I{\pi}\tau
\end{array} \]
where
\[ \begin{array}{lcl}
\sigma\I{?\phi}\tau         & \text{iff} & \sigma \models \phi \ \text{and}\ \tau = \sigma                                            \\
\sigma\I{ab}\tau            & \text{iff} & \tau = \sigma;ab                                                                           \\
\sigma\I{\pi;\pi'}\tau      & \text{iff} & \text{there is a sequence}\ \rho \ \text{such that}\ \sigma\I{\pi}\rho \ \text{and}\ \rho\I{\pi'}\tau \\
\sigma\I{\pi\union\pi'}\tau & \text{iff} & \sigma\I{\pi}\tau \ \text{or}\ \sigma\I{\pi'}\tau                                          \\
\sigma\I{\pi^*}\tau         & \text{iff} & \text{there is}\ n \in \Naturals \ \text{such that}\ \sigma\I{\pi^n}\tau                   \\
\end{array} \]
The inductive clause for $K_a\phi$ above is for the synchronous setting. 
For the asynchronous setting we replace $\sigma\synch_a\tau$ by $\sigma\asynch_a\tau$ in that clause. 
For simplicity we do not use a separate symbol for the asynchronous semantics --- it will always be clear from the context what `$\models$' stands for. 
A formula $\phi$ is \emph{valid}, notation $\models\phi$,  iff for all call sequences $\sigma$ we have $\sigma\models\phi$. 
\end{definition}

We assume that all our protocols are \emph{symmetric}, which means that for all $a \neq b \in A$ and $c \neq d \in A$, simultaneously replacing $a$ by $c$ and $b$ by $d$ in the protocol condition $\prot_{ab}$ yields $\prot_{cd}$. 
Intuitively, a symmetric protocol gives the same instructions and does not assign any special roles to individual agents.
Moreover, we only consider protocols that are \emph{epistemic}, which means that $\prot_{ab} \to K_a \prot_{ab}$ is valid.
This means that agents always know which calls they are allowed to make (see~\cite[page~170]{DitmarschGKP19}).

If in call $ab$ agent $a$ or $b$ becomes an expert, then the other agent simultaneously becomes an expert, whereas if in a call $ab$ agent $a$ or agent $b$ becomes a super expert, then the other agent need not also become a super expert.

We continue with terminology on protocol termination. 
If $\sigma \models \prot_{ab}$ we say that call $ab$ is \emph{$\prot$-permitted} after $\sigma$. 
A $\prot$-permitted call sequence is a call sequence consisting of $\prot$-permitted calls. 

A call sequence $\sigma$ is \emph{$\prot$-maximal} if it is $\prot$-permitted and for any call $ab$, $\sigma;ab$ is not $\prot$-permitted. 

For subsequent definitions we also need to consider infinite call sequences. 
We denote an infinite call sequence as $\sigma_\infty$.  An infinite call sequence $\sigma_\infty$ is $\prot$-permitted if for any $i \in \Naturals$ prefix $\sigma_\infty|i$ is $\prot$-permitted.

A $\prot$-permitted infinite call sequence $\sigma_\infty$ is \emph{fair} if: for all $x \neq y \in A$,
  if for all $i$ there is $j>i$ such that $xy$ is $\prot$-permitted in $\sigma_\infty|j$,
  then for all $i$ there is $j>i$ such that $\sigma_\infty[j]=xy$.
  Fairness of an infinite call sequence means that all permitted calls are made infinitely often.  A $\prot$-maximal call sequence $\sigma$ is also called \emph{fair}. The idea of fairness is that all calls get a sporting chance to contribute to the dissemination of knowledge before it is too late and their execution is no longer permitted: that is why not any finite call sequence is fair but only a maximal one. Whereas with an infinite fair call sequence this moment is never reached, and calls remain permitted. Various kinds of fairness are discussed in~\cite{AptW18,LiveseyW22}.

A call sequence $\sigma$ is \emph{successful} if $\sigma\models\Exp_A$, and $\sigma$ is \emph{super-successful} if $\sigma\models E\!\Exp_A$. An infinite call sequence is (super-)successful if has a prefix that is (super-)successful. A protocol $\prot$ is \emph{(super-)successful} if all fair $\prot$-permitted finite and infinite call sequences are (super-)successful.

\paragraph*{Variants and terminology} We discuss a number of variants of gossip protocols and thus need to make clear for each section or result which specific variation it concerns. To distinguish different variants we use the following terminology. Protocols $\prot$ are considered for synchronous and for asynchronous conditions. Results for `$\prot$' where we do not mention synchrony or asynchrony explicitly, hold for both. Otherwise, we will speak of `synchronous $\prot$' or `asynchronous $\prot$'. In subsequent sections we will introduce gossip protocols that are common knowledge between agents and we will call this `known $\prot$'. Another variation will be a semantics where super experts do not make or answer calls and we will call this `known $\prot$ with engaged agents'. This means that a result for `$\prot$' may not be a result for `known $\prot$' or for `known $\prot$ with engaged agents', and vice versa. Protocol `$\prot$' as such, without further qualification, therefore means `without common knowledge of the protocol between agents' and `without engaged agents'.

\subsection{Gossip protocols ANY, LNS, CMO and PIG}
Four gossip protocols feature in this contribution. The protocol conditions are for any $a,b \in A$ with $a \neq b$.
\begin{itemize}
\item $\ANY$ with protocol condition $\ANY_{ab} := \top$; 
\item $\LNS$ with protocol condition $\LNS_{ab} := \neg S_a b$.
\item $\CMO$ with protocol condition $\CMO_{ab} := \neg Cab \et \neg Cba$;
\item $\PIG$ with protocol condition $\PIG_{ab} := \hat{K}_a \Vel_{c \in A} ((S_a c \et \neg S_b c) \vel (\neg S_a c \et S_b c))$;
\end{itemize}

Protocols with these protocol conditions have been investigated in the literature but with the weaker termination condition that all agents are experts ($\Exp_A$ instead of $E\!\Exp_A$). As these protocols have the same protocol condition, they come with the same protocol-permitted call sequences and we will therefore use the same name below.

The acronym $\ANY$ stands for \emph{make ANY call} and is the standard uninformed protocol in the gossip literature~\cite{kermarrecetal:2007}. There are infinite $\ANY$-permitted sequences, such as repeating the same call forever.

The acronym $\LNS$ stands for \emph{Learn New Secrets}. A call $ab$ is $\LNS$-permitted if agent $a$ does not know the secret of agent $b$~\cite{attamahetal.ecai:2014,DitmarschEPRS17,hvdetal.dynamicgossip:2019}. This protocol is traditionally known as NOHO, for \emph{No One Hears Own}~\cite{hedetniemietal:1988}. All $\LNS$-permitted sequences are finite, as each agent will call any other agent at most once, and will therefore makes at most $|A|-1$ calls.

The acronym $\CMO$ stands for \emph{Call Me Once}. You are allowed to call an agent if you have not yet been involved in a call with that agent. This protocol was introduced in~\cite{hvdetal.dynamicgossip:2019} and is reminiscent of~\cite{doerr}. 
As any two out of $n$ agents are only allowed to call each other once, the maximum number of calls in $\CMO$ is $\binom{n}{2} = \frac{n(n-1)}{2}$.

The acronym $\PIG$ stands for \emph{Possible Information Growth}. Intuitively, the call $ab$ is permitted if $a$ considers it possible that: $a$ will learn a secret $c$ that $b$ knows but not $a$, or that: $b$ will learn a secret $c$ that $a$ knows but not $b$. It has been investigated in~\cite{attamahetal.ecai:2014,DitmarschEPRS17}. Protocol $\PIG$ also permits infinite call sequences, although fewer then $\ANY$~\cite{DitmarschEPRS17}, as will also be discussed later.

Already with a merely strengthened epistemic goal we can obtain novel results for gossip protocols. The protocols $\ANY$ and $\PIG$ are super-successful, whereas $\LNS$ and $\CMO$ are not super-successful. First consider $\LNS$, in which case we can be short, by way of an example.

\begin{example}
Consider four agents and the call sequence $\sigma = ab;cd;ac;bd$ from the introduction and Table~\ref{tab:ab;cd;ac;bd}.
  This sequence is $\LNS$-maximal and successful.
  However, after $\sigma$ agent $a$ also considers the sequence $\tau = ab;cd;ac;bc$  possible, where the last call is different. After $\tau$, agent $d$ is not an expert, so that $\tau$ is not successful, and therefore after $\sigma$ agent $a$ does not know that everybody is expert, thence $\sigma$ is not super-successful. Similar examples can be found for $n > 4$ agents, hence $\LNS$ is not super-successful.
\end{example}

In the case of $\LNS$, the protocol condition is too restrictive to ever obtain super-successful termination. It is therefore not worth investigating further. However, it contrasts well with the protocol condition for $\PIG$ that contains similar constituents in an epistemic way, which makes it worth investigating. From here on, we no longer consider $\LNS$.

The protocol $\CMO$ is also not super-successful, but in this case we can recoup super-successful termination with the more involved semantics of Section~\ref{section:ComKnowP} (and only there we will show that $\CMO$ is not super-successful without), which can then be lost again with the even more involved semantics of Section~\ref{section:engaged}. We therefore employ it as a showcase illustrating the semantics of these sections.

\subsection{Results for the protocol ANY}\label{section:resultsany}

We first show that $\ANY$ is super-successful. This is followed by some examples of that for three and for four agents that also demonstrate the difference between asynchrony and synchrony.

\begin{theorem}\label{prop.anysuper}
  $\ANY$ is super-successful.
\end{theorem}
\begin{proof} 
We first observe that there are no maximal $\ANY$-permitted call sequences.
If $\sigma$ is finite then any call $ab$ is $\ANY$-permitted after $\sigma$ which contradicts maximality.

Let $\sigma_\infty$ be a fair $\ANY$-permitted call sequence. In other words, this is simply a call sequence in which all calls occur infinitely often.
  Hence there is a finite call sequence $\tau \sqsubset \sigma_\infty$ in which each call occurs.
  After $\tau$ everyone is an expert.

By fairness each call also occurs in the complement $\sigma_\infty\setminus\tau$.
  Hence there is a finite call sequence $\rho$ in which each call occurs and such that $\tau;\rho \sqsubset \sigma_\infty$. Because after $\tau$ everyone is an expert, for each call in  $\rho$ the agents involved know that they are both experts. Hence everyone is a super expert after $\rho$. Therefore $\sigma_\infty$ is super-successful, and because it was arbitrary we have shown that $\ANY$ is super-successful.
  
We did not use any assumption about synchrony or asynchrony. Therefore the result holds for both.
\end{proof}

\begin{example}\label{ex.threea}
Let $A=\{a,b,c\}$, and let the protocol be asynchronous $\ANY$. 
We show that after call sequence $ab;ac;ab;cb$ it holds that $E\!\Exp_A$. 
\begin{itemize}
\item After the prefix $ab;ac$, agents $a$ and $c$ are experts. 
\item After the prefix $ab;ac;ab$, agents $a$ and $b$ are super experts. 

Agent $a$ already knew that $c$ is an expert and in call $ab$ also learns that $b$ now is an expert. 
 Therefore, she is a super expert: $ab;ac;ab \models K_a \Exp_A$. 
 
In the third call, $ab$, agent $b$ learns that $a$ is an expert. Because in the first call $ab$ agent $a$ did not know the secret of $c$ yet, but now gives it to $b$, agent $b$ can infer that the call $ac$ must have taken place between the two $ab$ calls. 
 As in that call $ac$ agent $c$ became an expert, agent $b$ also knows that agent $c$ is an expert. 
 Therefore also agent $b$ is a super expert. 
\item Now consider the entire sequence $ab;ac;ab;cb$. In final call $cb$, agent $c$ becomes a super expert. After the second call, $ac$, agent $a$ is an expert, hence $c$ knows this. After the last call $cb$ agent $b$ is an expert, hence $c$ also knows this.
Therefore agent $c$ knows that all agents are experts. 
\end{itemize}
The sequence $ab;ac;ab;cb$ is minimal: for three agents using asynchronous ANY there is no sequence of less than four calls that is super-successful --- see Appendix C for a GoMoChe query to check this.
\end{example}

\begin{example}
Now assume synchrony. Consider the prefix $ab;ac;ab$ of the call sequence $ab;ac;ab;cb$ of Example~\ref{ex.threea}. This prefix is already synchronously super-successful. Agent $c$ is not involved in the third call, and this is common knowledge to all agents. All three agents only consider $ab;ac;ab$ possible.
\end{example}

\begin{example}\label{ex.fourany}
Let now $A=\{a,b,c,d\}$, and let the protocol be asynchronous $\ANY$.  
A super-successfully terminating sequence $ab;cd;ac;bd;ab;ad;bc;cd$ consisting of eight calls was already given in the introductory Section~\ref{section:introduction}. 

However, we can reach $E\!\Exp_A$ in only seven calls, namely with sequence:
\[ ab; cd; ac; ad; bc; ba; bd \]
What agents learn in these calls is shown in Table~\ref{tab:sevenCalls} (generated using the model checker). Let us sketch the justification of these results. 

\begin{table}
  \centering
  \begin{tabular}{l|l@{}l@{}l@{}l@{ }l@{}l@{}l@{}l|l@{}l@{}l@{}l@{ }l@{}l@{}l@{}l|l@{}l@{}l@{}l@{ }l@{}l@{}l@{}l|l@{}l@{}l@{}l@{ }l@{}l@{}l@{}l|l}
    \           & a& & & & & & &  &  &b& & & & & &  &  & &c& & & & &  &  & & &d& & & &  & initial state         \\
    $\call{ab}$ & a&b& & & & & &  & a&b& & & & & &  &  & &c& & & & &  &  & & &d& & & &  &                          \\
    $\call{cd}$ & a&b& & & & & &  & a&b& & & & & &  &  & &c&d& & & &  &  & &c&d& & & &  &                          \\
    $\call{ac}$ & a&b&c&d&A& &C&  & a&b& & & & & &  & a&b&c&d&A& &C&  &  & &c&d& & & &  &                          \\
    $\call{ad}$ & a&b&c&d&A& &C&D & a&b& & & & & &  & a&b&c&d&A& &C&  & a&b&c&d&A& &C&D &                          \\
    $\call{bc}$ & a&b&c&d&A& &C&D & a&b&c&d& &B&C&  & a&b&c&d&A&B&C&D & a&b&c&d&A& &C&D & $K_c \Exp_A$      \\
    $\call{ba}$ & a&b&c&d&A&B&C&D & a&b&c&d&A&B&C&  & a&b&c&d&A&B&C&D & a&b&c&d&A& &C&D &                          \\
    $\call{bd}$ & a&b&c&d&A&B&C&D & a&b&c&d&A&B&C&D & a&b&c&d&A&B&C&D & a&b&c&d&A&B&C&D & $E\!\Exp_A$ \\
  \end{tabular}
  \caption{Results of $ab;cd;ac;ad;bc;ba;bd$ with synchronous $\ANY$. We describe what an agent knows: a lower case $y$ in the column of $x$ means $S_x y$; an upper case $Y$ means $K_x \Exp_y$. Therefore, ``abcd'' denotes an expert and ``ABCD'' denotes a super expert.}\label{tab:sevenCalls}
\end{table}

After prefix $ab;cd;ac;ad$ we have three experts $a$, $c$ and $d$.
In the fifth call $bc$ agent $b$ becomes an expert, and as usual $b$ and $c$ learn about each other that they are experts.
In addition, and somewhat surprisingly, $c$ also learns in that call that $d$ is an expert. This is due to synchrony and can be checked as follows: $c$ knows that between the third call $ac$ and the fifth call $bc$ there must have been a call which must have between between $a$ and $d$ or between $a$ and $b$.
But in the fifth call $bc$ agent $b$ only knows the secrets of $a$ and $b$, hence this fourth call did not involve $b$. Therefore, it must have involved $d$, which implies that $d$ is an expert. (See Table~\ref{tab:sevenCalls}). 

Note that agent $c$ only became a super expert in call $bc$ because of synchrony, and that $c$ is not involved in calls after that, and therefore asynchronously considers it possible that $bc$ was the last call. Therefore, this seven-call sequence is not super-successful asynchronously. By exhaustive search in the model checker GoMoChe  we confirmed that other call sequences of at most seven calls are also not super-successful (Appendix C).
\end{example}

The above examples demonstrated that:
\begin{observation}\label{prop:syncAnyShorterSupSuccThanAsyncAny}
For three and for four agents, synchronous $\ANY$ permits shorter super-successful sequences than asynchronous $\ANY$.
\end{observation}
We have not shown that for any number of agents synchronous $\ANY$ permits shorter super-successful sequences than asynchronous $\ANY$.
We already do not know whether this is the case for five agents.

We continue with observations relating expert to super expert, and different super experts.
\begin{observation}\label{obs.synch}
  Assuming synchrony, an agent can become an expert and a super expert in the same call.
\end{observation}

\begin{example}\label{ex.6Calls}
Consider $4$ agents, synchronous $\ANY$ and $ab;ac;cd;ab;bc;ab$. In the final call, agent $a$ becomes an expert and a super expert. See Table~\ref{tab:6Calls}. This sequence was found and the table was generated using the model checker GoMoChe (see Appendix C). It is easy to explain the entries in Table~\ref{tab:6Calls} and to see why $a$ is a super expert. 

\begin{table}[H]
  \centering
  \begin{tabular}{l|l@{}l@{}l@{}l@{ }l@{}l@{}l@{}l|l@{}l@{}l@{}l@{ }l@{}l@{}l@{}l|l@{}l@{}l@{}l@{ }l@{}l@{}l@{}l|l@{}l@{}l@{}l@{ }l@{}l@{}l@{}l|l}
    \           & a& & & & & & &  &  &b& & & & & &  &  & &c& & & & &  &  & & &d& & & &  & initial state          \\
    $\call{ab}$ & a&b& & & & & &  & a&b& & & & & &  &  & &c& & & & &  &  & & &d& & & &  &                          \\
    $\call{ac}$ & a&b&c& & & & &  & a&b& & & & & &  &  a&b&c& & & & &  &  & & &d& & & &  &                          \\
    $\call{cd}$ & a&b&c& & & & &  & a&b& & & & & &  &  a&b&c&d& & &C&D&  a&b&c&d& & &C&D&                          \\
    $\call{ab}$ & a&b&c& & & &C&D& a&b&c& & & & &  &  a&b&c&d& & &C&D&  a&b&c&d& & &C&D&                          \\
    $\call{bc}$ & a&b&c& & & &C&D& a&b&c&d& &B&C&D&  a&b&c&d& &B&C&D&  a&b&c&d& & &C&D&                          \\
    $\call{ab}$ & a&b&c&d&A&B&C&D& a&b&c&d&A&B&C&D&  a&b&c&d& &B&C&D&  a&b&c&d& {\color{white}A}& {\color{white}B}&C&D&    {\small $a$ is expert and} \\
        \           & & & & & & & &  &  & & & & & & &  &  & & & & & & &  &  & & & & & & &  &   {\small super expert}                      \\
  \end{tabular}
  \caption{Results of $ab;ac;cd;ab;bc;ab$ with synchronous $\ANY$.}\label{tab:6Calls}
\end{table}

In the second call $ab$ that is the fourth call in the sequence, $a$ considers it possible that the third call, not involving her, was: $bc$, $bd$ or $cd$. As in the fourth call she learns that agent $b$ only knew $\{A,B\}$, she can therefore rule out that $b$ was involved in the third call. It must therefore have been $cd$. Also, $a$ knows that $c$ brought secrets $\{A,B,C\}$ into that call $cd$, that $c$ learnt in the second call, $ac$. Therefore $a$ learns that $c$ and $d$ became experts in the third call. But $a$ is not yet an expert herself after this second call $ab$.

After the third call $ab$ that is the sixth call in the sequence, $a$ is an expert, and $a$ knows that $b$ is also an expert. Therefore, she is now a super expert.
\end{example}

\begin{observation} Two agents can become super experts in the same call.
\end{observation}

\begin{example}
It is straightforward to see that an asychronous example is call sequence $ab;cd;ac;bd;ad;bc;ab$ from the above Example~\ref{ex.fourany}. In the final call, $a$ and $b$ become super experts. Now just remove the final call, and it is obvious that neither $a$ nor $b$ is a super expert: the remaining final two calls $ad;bc$ are made by disjoint pairs of agents that both consider it possible that the other call did not happen. 

A synchronous example is given by the last row of Table~\ref{tab:6Calls}, wherein $a$ and $b$ simultaneously become super experts by calling each other.
\end{example}

How many calls are needed for at least one agent to become a super expert?
\begin{proposition}\label{ex.threes}
For $n \geq 2$ agents, an agent can become a super expert in $2n-3$ calls.
\end{proposition}
\begin{proof}
  Let there be $n \geq 2$ agents. Let an agent call other agents in succession. These are $n-1$ calls. Let that agent call all other agents again in succession except the last one. These are $n-2$ calls. Then this agent is now a super expert. Altogether, these are  $(n-1)+(n-2)=2n-3$ calls. 
  
Notably, this sequence makes the caller a super expert,   no matter whether we assume  synchrony or asynchrony, because the caller is involved in all calls. The analysis does not involve epistemic reasoning sometimes allowing for shorter sequences with synchrony. Synchronously the caller only considers one call sequence, asynchronously many more, but the shortest one involving herself in all calls remains the same.
 \end{proof}
An example of Proposition~\ref{ex.threes} for $4$ agents is the ($2\cdot 4 -3=$) five-call sequence $ab;ac;ad;ab;ac$. It makes agent $a$ a super expert. The call $ad$ is not needed twice, because $d$ already became an expert in call $ad$.
   
We are uncertain whether the bound in Proposition~\ref{ex.threes} is hard. However, if there is a super expert, then all agents are experts, and merely one less call, $2n-4$, is the minimum number for all of $n \geq 4$ agents to become experts~\cite{tijdeman:1971}. For $n=2$, one call $ab$ is the minimum and for $n=3$ the call sequence $ab;ac;ab$ is the minimum. There are different ways for all agents to become experts in $2n-4$ calls and for $n \geq 4$ none known to us result in an agent also becoming a super expert in the final call. The number $2n-3$ figures in various results for gossip with the expert goal: the conjectured minimum for termination of an \emph{epistemic} gossip protocol is $2n-3$ (that is, with protocol conditions known by the calling agent, so that one cannot require that the first two calls are disjoint)~\cite{boukeLOFT,abs-1907-09097}; the minimum number of calls for various network topologies where agents cannot call all other agents but only some (their `neighbours') is $2n-3$, for example circles and binary trees~\cite{hedetniemietal:1988}.

We can use Example~\ref{ex.fourany} to show how many calls can make everyone a super expert.

\begin{proposition}\label{prop.minmin}
Given $n \geq 4$ agents, super-successful asynchronous $\ANY$ termination can be achieved in $n - 2 + \binom{n}{2}$ calls.
\end{proposition}
\begin{proof}
  Consider $n$ agents, select $4$ agents $a,b,c,d$ among these $n$ and $1$ agent $a$ among these $4$. First, let $a$ call all the agents except $b,c,d$. These are $(n-4)$ calls. Then, let $a,b,c,d$ execute the sequence $ab;cd;ac;bd$. These are $4$ calls. Note that in the final two calls $ac$ and $bd$ these four agents become experts. Apart from $ac$ and $bd$, we now let all remaining pairs of agents also call each other. There are $\binom{n}{2}$ pairs of agents (including $ac$ and $bd$). When after a call both agents are experts, they know this from one another. Therefore, after the $\binom{n}{2}$ calls, all agents know that all agents are experts: $E\Exp_A$ holds.
  Altogether these are $(n-4) + 4 -2 + \binom{n}{2} = n-2 + \binom{n}{2}$ calls. 
\end{proof}
In the proof of Proposition~\ref{prop.minmin}, the first call in which two agents become experts is call $n-1$. This is the minimum, as $n-1$ links are need to connect $n$ points in a graph. No agents can become experts in the first $n-2$ calls. In all subsequent $\binom{n}{2}$ calls, when calling each other, either agents $x$ and $y$ become expert, or they learn from each other that they already were experts. This suggests that the only way in which an agent asynchronously can get to know that another agent is an expert is by calling that agent. The next example shows that this is false.

\begin{example}\label{example.noprior}
Assume asynchrony. Consider four agents and the call sequence $\sigma=ac;ad;ac;bc;ac$. After the sequence $ac;ad;ac$ these three agents share their secrets. In call $ad$ agent $a$ learns that $d$ has not been involved in a call with $b$ and in the second call $ac$ agent $a$ learns that $c$ has not been involved in a call with $b$ after the first call $ac$. Therefore $a$ knows that whomever $b$ makes his first call with, he will become expert. In the third call $ac$ of $\sigma$ agent $a$ learns that $c$ knows the secret of $b$, so there should have been a call between $b$ and $c$ or between $b$ and $d$. (If between $b$ and $d$, that call could have taken place between call $ad$ and the second call $ac$, but not if between $b$ and $c$.) Either way, $b$ then would be an expert. \emph{So $a$ knows that $b$ is an expert.} However, there has been no prior call between $a$ and $b$ wherein they both became or already were experts.

The model checker GoMoChe confirms that no super-successful asynchronous seven-call sequence exists  (see Appendix C), and also that no super-successful asynchronous eight-call sequence exists extending $\sigma$. So, this prefix $\sigma=ac;ad;ac;bc;ac$ is not an efficient start in order to get super-successful termination.
\end{example}
Because of phenomena like in Example~\ref{example.noprior} we are uncertain whether the bound in Proposition~\ref{prop.minmin} is hard.

\subsection{Results for the protocol PIG}\label{section:resultspig}

The $\PIG$ protocol has infinite executions ($\PIG$-permitted sequences) for four or more agents~\cite{DitmarschEPRS17}. 

Infinite call sequence $ab;ab;ab;\dots$ is asynchronous $\PIG$-permitted. Call $ab$ is indistinguishable for agent $a$ from call sequence $ab;bc$, after which agent $b$ has learnt something new. Thus, after first call $ab$, the same call $ab$ is again $\PIG$-permitted. Similarly, $ab;ab \asynch_a ab;ab;bc$, thus $ab$ is again $\PIG$-permitted after $ab;ab$, and so on. However, this infinite call sequence is not synchronous $\PIG$-permitted (although it is $\ANY$-permitted).

Infinite call sequence $ab;cd;ab;cd;ab;cd;\ldots$ is synchronous $\PIG$-permitted, as after any even number of calls agent $a$ considers it possible that agent $b$ was involved in the previous call and would thus have learnt a new secret in that call. Therefore, each odd call can again be call $ab$. 

Unlike for $\ANY$, there are maximal $\PIG$-permitted call sequences: if in call sequence $\tau;ab$ agents $a$ and $b$ become expert in that final call $ab$, then call $ab$ is not $\PIG$-permitted in any extension of $\tau;ab$.

\begin{example}\label{ex.pig}
The super-successful call sequence $\sigma = ab;cd;ac;bd;ab;ad;cb;cd$ from Section~\ref{section:introduction} is not only $\ANY$-permitted but also $\PIG$-permitted. It is also $\PIG$-maximal, but obviously not $\ANY$-maximal, as no $\ANY$  call sequence is maximal. 

We can adapt $\sigma$ to get a super-successful $\ANY$-permitted sequence that is not $\PIG$-permitted: in $\sigma$, repeat penultimate call $cb$ before final call $cd$, i.e., with the additional call in bold, $ab;cd;ac;bd;ab;ad;cb;\pmb{cb};cd$.
\end{example}

To obtain termination results for the $\PIG$ protocol we need to analyze finite call sequences and infinite call sequences. We will first show a relation between the $\PIG$ protocol condition and the termination condition super-successful, and we will then show that $\PIG$ is super-successful.

\begin{lemma}\label{lemma:VelPigIffNotEExp}
$\Vel_{a,b\in A} \PIG_{ab} \eq \neg E\!\Exp_A$ is valid.
\end{lemma}
\begin{proof}
We recall that for any call $ab$, $\PIG_{ab} := \hat{K}_a \Vel_{c \in A} ((S_a c \et \neg S_b c) \vel (\neg S_a c \et S_b c))$.

Assume $\Vel_{a,b\in A} \PIG_{ab}$. 
If an agent $a$ considers it possible that there is a secret that is not known by another agent $b$ or by herself, then she considers it possible that that other agent or herself is not an expert: $\neg K_a \neg \neg \Exp_b \vel \neg K_a \neg \neg \Exp_a$. Either way, she then does not know that all agents are experts, $\neg K_a \Exp_A$, and therefore $\neg E\!\Exp_A$.

For the other direction, suppose $\lnot E\!\Exp_A$.
Then for some three agents $a,b,c$ we have $\hat{K}_a \lnot S_b c$.
We distinguish the case where $S_a c$ holds from the case where $\neg S_a c$ holds. If $S_a c$, then $K_a S_a c$. Thus also $\hat{K}_a (S_a c \land \lnot S_b C) $, which implies $\PIG_{ab}$. If $\lnot S_a c$, then we have $\PIG_{ac}$. In both cases we get $\Vel_{a,b\in A} \PIG_{ab}$.
\end{proof}

\begin{theorem}\label{Thm:pig}
$\PIG$ is super-successful. 
\end{theorem}
\begin{proof}
First, let $\sigma$ be a $\PIG$-maximal call sequence. This means that $\sigma \models \Et_{a,b\in A} \lnot \PIG_{ab}$. By Lemma~\ref{lemma:VelPigIffNotEExp} we thus have $\sigma \models E\!\Exp_A$, i.e. $\sigma$ is super-successful.

Next, let $\sigma_\infty$ be a $\PIG$-permitted infinite call sequence.
Towards a contradiction suppose $\sigma_\infty$ is fair.
    
Let $\tau \sqsubset \sigma_\infty$. As $\sigma_\infty$ is $\PIG$-permitted, there is an agent $a$ for which there is an agent $b$ such that $\tau \models \PIG_{ab}$, that is, $\tau\models \M_a \Vel_{c \in A} (S_a c \et \neg S_b c) \vel (\neg S_a c \et S_b c)$. 

First assume that there is a $c$ such that $\tau\models \M_a\neg S_a c$. Then $ac$ is  $\PIG$-permitted after $\tau$, so, as $\sigma_\infty$ is fair, there is a $\rho$ with $\tau \sqsubset \rho \sqsubset \sigma_\infty$ with $ac \in \rho\setminus\tau$. And then $\rho\models S_a c$. We can do this for all such $c$. Therefore, there is $\rho'$ with $\rho \sqsubset \rho' \sqsubset \sigma_\infty$ after which $a$ is an expert. 
 
At that stage it can still be that $a$ considers it possible that some other agent $b$ is not an expert: $\rho' \models \M_a \Vel_{c \in A} (S_a c \et \neg S_b c)$. In that case, $ab$ is still $\PIG$-permitted after $\rho'$. Therefore, as $\sigma_\infty$ is fair, there is a $\xi$ with $\rho' \sqsubset \xi \sqsubset \sigma_\infty$ such that $ab \in \xi\setminus\rho'$. As $a$ already was expert, $\xi \models K_a \Exp_b$. Again, we can do this for all such $b$. Therefore, there is $\xi'$ with $\xi \sqsubseteq \xi' \sqsubset \sigma_\infty$ after which $a$ is a super expert. 

We are almost there. At this stage it can still be that some agent $d$ other than $a$ considers it possible that not all other agents are expert. As $a$ is a super expert, $d$ must already be an expert. We then must have that $\xi'\models \M_d \neg S_e f$ for some agents $e,f$ which $\PIG$-permits call $de$ and we repeat the argument in the preceding paragraph so that we finally obtain a $\chi$ with $\xi'\sqsubset \chi\sqsubset\sigma_\infty$ after which $d$ is also a super expert. We do this for all such $d$, so that there is a $\chi'$ with $\chi\sqsubseteq \chi'\sqsubset\sigma_\infty$ after which $E\!\Exp_A$ holds.
    
By Lemma~\ref{lemma:VelPigIffNotEExp} we then get $\chi' \models \lnot \Vel_{a,b\in A} \PIG_{ab}$ which means that any remaining calls in $\sigma_\infty$ after $\chi'$ are not $\PIG$-permitted. This contradicts that $\sigma_\infty$ is $\PIG$-permitted.
  
  Hence there are no fair infinite $\PIG$-permitted call sequences.
\end{proof}

\section{Common knowledge of gossip protocols}\label{section:ComKnowP}

\subsection{Syntax and semantics --- known protocols}
We now enrich the framework by modelling common knowledge of protocols. This requires that we replace the knowledge modality by a knowledge modality depending on a given protocol, and that we replace the epistemic relations by more restricted relations incorporating common knowledge of the protocols (it is a restriction as this reduces the uncertainty about call sequences). The resulting semantic framework is more complex, because these definitions require mutual recursion both in the syntax and in the semantics. In the syntax, because what an agent knows now depends on a given protocol, whereas the protocol is defined with respect to a protocol condition, that could be a knowledge formula, that needs to be evaluated in the semantics. Similarly, in the semantics, the epistemic relation (that interprets a knowledge modality) depends on a given protocol, and thus on the interpretation of the protocol conditions: formulas, so we are back in the syntax. We adapt the framework presented in~\cite{DitmarschGKP19} to our needs.

\begin{definition}[Language and Protocol --- known protocols]\label{def.prot2}
In the BNF of the language $\lang$ we replace the inductive clause $K_a \phi$ by an inductive clause $K^\prot_a \phi$. For $\Et_{a\in A}K^\prot_a\phi$ we write $E^\prot\phi$. Then, a \emph{protocol} called ``\emph{known $\prot$}'' is a program defined by
 \[ \prot := {(?\neg E^\prot\!\Exp_A ;\Union_{a\neq b \in A} (?\prot_{ab};ab))}^* ; ? E^\prot\!\Exp_A \]
\end{definition}
Formula $K_a^\prot \phi$ means that agent $a$ knows $\phi$ given (common knowledge between all agents of) protocol $\prot$. So, $E^\prot\!\Exp_A$ means that everyone is a super expert given protocol $\prot$. We call $K_a^\prot \phi$ \emph{protocol-dependent knowledge} (of $\phi$).

We now define $\synch_a^\prot$ and $\asynch_a^\prot$, simultaneously with the satisfaction relation $\models$. The difference with the prior Definition~\ref{Sema:Def} of $\models$, is that we replace $K_a$ by $K^\prot_a$ everywhere and $\synch_a$ by $\synch^\prot_a$ everywhere, and similarly for $\asynch_a$. Only the knowledge clause of the semantics is therefore given. 

\begin{definition}[Epistemic relations and semantics --- known protocols]\label{def.synch2} \ \\
Let $a\in A$. The synchronous accessibility relation $\synch_a^\prot$ between call sequences is the smallest symmetric and transitive relation such that:
\begin{itemize}
\item $\epsilon \synch_a^\prot \epsilon$,
\item if $\sigma \synch_a^\prot \tau$,
 $a \notin \{b,c,d,e\}$,
 $\sigma \models \prot_{bc}$ and
 $\tau  \models \prot_{de}$
 then
 $\sigma;bc \synch_a^\prot \tau;de$
\item if $\sigma \synch_a^\prot \tau$,
 $I^\sigma_b = I^\tau_b$,
 $\sigma \models \prot_{ab}$ and 
 $\tau  \models \prot_{ab}$,
 then
 $\sigma;ab \synch_a^\prot \tau;ab$
\item if $\sigma \synch_a^\prot \tau$,
 $I^\sigma_b = I^\tau_b$,
 $\sigma \models \prot_{ba}$ and 
 $\tau  \models \prot_{ba}$,
 then
 $\sigma;ba \synch_a^\prot \tau;ba$
\end{itemize}
The asynchronous accessibility relation $\asynch_a^\prot$ between call sequences is the same as the relation $\synch_a^\prot$ except that the second clause is replaced by
\begin{itemize}
\item if $\sigma \asynch_a^\prot \tau$,
 $a \notin \{b,c\}$, and
 $\sigma \models \prot_{bc}$,
 then
 $\sigma;bc \asynch_a^\prot \tau$
\end{itemize}
Finally, in the inductive definition of $\models$ we replace the clause for $K_a \phi$ by: 
\[ \begin{array}{lcl}
\sigma \models K_a^\prot \phi   & \text{iff} & \tau \models \phi \ \text{for all}\ \tau \ \text{such that}\ \sigma\synch_a^\prot\tau
\end{array} \]
\end{definition}

On the set of $\prot$-permitted call sequences the relations $\synch^\prot_a$ and $\asynch^\prot_a$ are equivalence relations, but not on the set of all call sequences: below, we show that the {\bf T} axiom $K^\prot_a \phi \imp \phi$ fails so that the relation is not reflexive and therefore no equivalence relation; see also~\cite{DitmarschGKP19}.

For $K^\ANY_a\phi$ we write $K_a \phi$,
for $\synch_a^\ANY$ we write $\synch_a$ and for $\asynch_a^\ANY$ we write $\asynch_a$. This is not ambiguous, because if for all $a,b \in A$, $\prot_{ab} = \top$, we regain the syntax and semantics of the previous Section~\ref{section:def-part}.

In Definition~\ref{def.prot2} of the version of the language and the protocols assuming commonly known protocols, formula $K^\prot_a \phi$ contains as parameter a protocol $\prot$, and vice versa a protocol $\prot$ contains protocol conditions $\prot_{ab}$ that are formulas. This is well-defined, once we see $K^\prot_a\phi$ as $K_a(X,\phi)$ where $X$ is the list of formulas $\prot_{ab}$ for $a \neq b \in A$. In other words, we see $K^\prot_a \phi$ as a modality with not a single argument $\phi$, but with $|A|^2 - |A| + 1$ arguments.\footnote{Namely, the combinations $\binom{|A|}{2}$ of $2$ out of $|A|$ elements, times $2$ as for any different $a,b \in A$ formulas $\prot_{ab}$ and $\prot_{ba}$ count separately, plus $1$ for the formula $\phi$ bound by the $K^\prot_a$ modality.} For formal precision, in Appendix A we give the well-founded preorder demonstrating that the semantics is well-defined. As also discussed at length in~\cite{DitmarschGKP19}, this excludes self-referential protocols.

As already explicit in Definition~\ref{def.prot2}, we refer to the protocol $\prot$ with the syntax and semantics for common knowledge of protocols as \emph{known $\prot$}. The properties of protocol `known $\prot$' may be very different from those of protocol $\prot$, except that known $\ANY$ is the same as $\ANY$, as said above. To summarize the differences: the protocol $\prot$ and the protocol known $\prot$ have the same protocol conditions $\prot_{ab}$ for all calls $ab$ but have different termination condition, namely $E\!\Exp_A$ respectively $E^\prot\!\Exp_A$. Therefore, the set of $\prot$-permitted call sequences remains the same either way. Protocol-dependent knowledge $K^\prot_a\phi$ is merely a way to use the information already available in the set of $\prot$-permitted call sequences differently, namely by defining the relations $\sim_a^\prot$ and $\approx_a^\prot$. There is nothing against using different $K^\prot_a$ and $K^{\prot'}_a$ in the same formula. Implicitly, we have already done this: the protocol condition of $\PIG$ remains the same, namely $\hat{K}_a \Vel_{c \in A} ((S_a c \et \neg S_b c) \vel (\neg S_a c \et S_b c))$. Formally, this is now $\hat{K}_a^\ANY \Vel_{c \in A} ((S_a c \et \neg S_b c) \vel (\neg S_a c \et S_b c))$; whereas the termination condition of known $\PIG$ is $E^\PIG\!\Exp_A$.

We list some elementary properties of the semantics below, but refer to~\cite{DitmarschGKP19} for further discussion and proofs.
Here, $a,b \in A$, protocols $\prot$, $\prot'$, and $\phi \in \lang$ are all arbitrary. 
\begin{itemize}
\item $\models K^\prot_a \phi \imp K^\prot_a K^\prot_a \phi$, and $\models \neg K^\prot_a \phi \imp K^\prot_a \neg K^\prot_a \phi$. 
 Intuitively, $K^\prot_a$ has two of the standard properties of knowledge, namely positive and negative introspection. 
\item $\not\models K^\prot_a \phi \imp \phi$. Whenever $\sigma$ is not $P$-permitted, then $\sigma \models K^\prot_a\bot$. In other words, if you are in violation of the protocol, anything goes. However, whenever $\sigma$ is $\prot$-permitted, then $\sigma \models K^\prot_a \phi \imp \phi$.
\item $\models \prot_{ab} \imp \prot'_{ab}$ implies $\models K_a^{\prot'} \phi \imp K_a^\prot\phi$; as $K^\ANY_a \phi = K_a \phi$, for all $a,b \in A$, $\ANY_{ab}=\top$ and $\psi \imp \top$ is valid for all $\psi$, a corollary is that $\models K_a \phi \imp K_a^\prot\phi$.
\item $\models S_a b \eq K_a^\prot S_a b$ and $\models \neg S_a b \eq K_a^\prot \neg S_a b$. Whether $a$ knows the secret of $b$ can be determined from the call sequence and independently from the protocol.
\end{itemize}

\subsection{Results for the protocols PIG and CMO}\label{section:resultscmo}

As known $\ANY$ is the same as $\ANY$, and $\ANY$ is super-successful (Theorem~\ref{prop.anysuper}), known $\ANY$ is also super-successful. This is therefore not separately given as a result. The protocol known $\PIG$ is also still super-successful. This is non-trivial, as $K_a^\PIG \phi$ need not be equivalent to $K_a \phi$. The minor results in this section are called propositions instead of theorems. 
\begin{proposition}\label{thm:xxx}
Known $\PIG$ is super-successful.
\end{proposition}
\begin{proof}
The proof is a straightforward but interesting variation of the proof of Theorem~\ref{Thm:pig} that
$\PIG$ is super-successful. We use that from $\PIG_{ab} \imp \ANY_{ab}$ it follows that for all $\phi$, $K_a \phi \imp K^\PIG_a \phi$ (see the properties of protocol-dependent knowledge listed above), so in particular $K_a \Exp_A \imp K^\PIG_a \Exp_A$ and also $E\!\Exp_A \imp E^\PIG\!\Exp_A$.

In the proof, we still reason over $\PIG$-permitted call sequences, that is, sequences that satisfy for every next call $\PIG_{ab}$ which is $\M_a \Vel_{c \in A} (S_a c \et \neg S_b c) \vel (\neg S_a c \et S_b c)$, the standard $\M_a$, not $\M^\PIG_a$. The protocol condition is the same. But the termination condition is different: whereever $K_a \Exp_A$ is obtained in the proof, we now need $K_a^\PIG \Exp_A$, and whereever $E\!\Exp_A$ is obtained in the proof, we now need $E^\PIG\!\Exp_A$. But both follow from the observed implications above, so we are fine.

It is still the case, like in the proof of Theorem~\ref{Thm:pig}, that no fair infinite $\PIG$-permitted call sequences occur. However, it may be that $E^\PIG\!\Exp_A$ holds before $E\!\Exp_A$ in a maximal $\PIG$-permitted call sequence. (Although $K_a^\PIG\phi$ is not always equivalent to $K_a\phi$, we do not know whether $E^\PIG\!\Exp_A$ is equivalent to $E\!\Exp_A$.)
\end{proof}

For the protocol $\CMO$, whether the agents know that $\CMO$ is executed makes a big difference. It is the difference between being super-successful or not. We first list the negative results, followed by a positive result for synchronous known $\CMO$.

\begin{proposition}\label{thm:nonck-CMO}
Synchronous (not commonly known) $\CMO$ is not super-successful.
\end{proposition}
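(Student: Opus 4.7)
My plan is to exhibit a concrete $\CMO$-terminal sequence after which some agent fails to be a super expert, exploiting the freedom afforded by the non-protocol-aware semantics of Definition~\ref{def.synch}: the witness of ignorance need not itself be $\CMO$-permitted.

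I would fix four agents $A = \{a,b,c,d\}$ and take
\[ \sigma = ab;ac;bd;cd;ad;bc. \]
Since $\sigma$ uses each of the six unordered pairs exactly once, it is $\CMO$-permitted and cannot be extended by any further $\CMO$-permitted call, hence is $\CMO$-terminal. A routine calculation verifies that every agent is in fact an expert after $\sigma$, so the proposition does not follow from any failure of the classical expert goal; it must come from a failure of higher-order knowledge. As the indistinguishable alternative for agent $a$, I use
\[ \tau = ab;ac;cd;cd;ad;cd, \]
in which $b$ participates only in the first call and therefore ends $\tau$ with secret set $\{a,b\}$, so $\tau \not\models \Exp_b$ and hence $\tau \not\models \Exp_A$. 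To establish $\sigma \synch_a \tau$ I build the relation up position by position using the clauses of Definition~\ref{def.synch}: positions $1, 2, 5$ feature identical $a$-calls in both sequences, and the required secret-set coincidences hold at those points --- trivially before positions $1, 2$, and at position $5$ because $I^{\sigma|4}_d = I^{\tau|4}_d = \{a,b,c,d\}$ (agent $d$ collects all secrets through $bd;cd$ in $\sigma$ and through $cd;cd$ in $\tau$); the $a$-free positions $3, 4, 6$ are matched by the second clause of Definition~\ref{def.synch}, which imposes no constraint on the identities of the two callers. From $\sigma \synch_a \tau$ and $\tau \not\models \Exp_A$ one then concludes $\sigma \not\models K_a\Exp_A$, so $a$ is not a super expert after $\sigma$, and therefore $\CMO$ is not super-successful.

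The main obstacle is choosing $\tau$ to meet two competing constraints: keep $b$ out of all calls after the first (so $b$ remains ignorant of $c$'s and $d$'s secrets and fails to be an expert in $\tau$), while still making $d$'s secret set equal to $\sigma$'s value $\{a,b,c,d\}$ immediately before the $a$-call $ad$ at position $5$. Replacing $\sigma$'s call $bd$ at position $3$ by a duplicated $cd$ in $\tau$ accomplishes both, since $d$ becomes a full expert at position $3$ via a call with the already-well-informed $c$. The duplicated $cd$ is of course disallowed by $\CMO$, and that is the heart of the matter: it is exactly $a$'s inability to rule out such protocol-violating alternatives that prevents her from knowing everyone is an expert.
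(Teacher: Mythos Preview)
Your proof is correct. The indistinguishability $\sigma \synch_a \tau$ is verified step by step exactly as required by Definition~\ref{def.synch}, the crucial check at position~5 that $I^{\sigma|4}_d = I^{\tau|4}_d = \{a,b,c,d\}$ holds, and $\tau$ indeed leaves $b$ a non-expert. Since $\sigma$ exhausts all six unordered pairs it is $\CMO$-maximal, so $\sigma \not\models K_a\Exp_A$ establishes the proposition.

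Your approach differs from the paper's in scope rather than in spirit. The paper gives a uniform construction for every $n \geq 4$: take any maximal $\CMO$ sequence $\rho$ on $n-1$ agents, append calls from the last agent $a_n$ to all others, and then for each $a_i$ build an indistinguishable $\tau$ in which $a_n$ repeatedly calls a single fixed $a_j$ except at the slot where $a_i$ is called. You instead exhibit one handcrafted four-agent instance. Both proofs hinge on the same observation you state explicitly at the end: without protocol knowledge the witness $\tau$ may repeat calls and thus violate $\CMO$, which is precisely what the agent cannot exclude. Your version is self-contained and easy to verify by hand; the paper's buys generality across all $n \geq 4$ at the cost of a slightly more abstract setup. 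Incidentally, the paper's Example~\ref{ex.putty} (which follows the proposition) is very close in flavour to what you do, though with a different concrete $\sigma$ and a different ignorant agent.
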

\begin{proof}
There are counterexamples whenever $|A| \geq 4$. 

Given $A=\{a_1,a_2,\ldots,a_n\}$, let $\rho$ be a maximal $\CMO$-permitted sequence between agents $\{a_1,a_2,\ldots,a_{n-1}\}$. 
From~\cite{hvdetal.dynamicgossip:2019} it follows that after $\rho$ all agents $a_1,a_2,\ldots,a_{n-1}$ know all their secrets. So they are all experts for the set $\{a_1,a_2,\ldots,a_{n-1}\}$ except that none knows the secret of $a_n$. 
Now define the call sequence $\sigma$ by having agent $a_n$ call all other agents after $\rho$:
\[ \sigma := \rho;a_n a_1;a_n a_2;\ldots; a_n a_{n-1} \]
We note that $\sigma$ is again a maximal $\CMO$ sequence, as $\binom{n-1}{2} + (n-1) = \binom{n}{2}$. 
After $\sigma$, all agents are experts, and agent $a_n$ is the only super expert. 
Let $i,j < n$ and $i \neq j$. 
Now consider the following call sequence $\tau$ where $a_n$ only calls $a_j$ (many times) and $a_i$ (once, at the same moment as in $\sigma$):
\[ \tau := \rho;\overbrace{a_n a_j;a_n a_j;\ldots a_n a_j}^{i-1 \text{ times }}; a_n a_i; \overbrace{a_n a_j; a_n a_j \ldots a_n a_j}^{ n-i-1 \text{ times }} \]
We then have that $\sigma \synch_{a_i} \tau$ and that $\tau \not\models \Exp_A$. 
Therefore, $\sigma \models \neg K_{a_i} \Exp_A$. 
As $\sigma$ is maximal and not super-successful, $\CMO$ is not super-successful. 
\end{proof}

\begin{proposition}\label{thm:async-CMO}
Asynchronous $\CMO$ is not super-successful.
\end{proposition}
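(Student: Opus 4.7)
The plan is to reduce to Proposition~\ref{thm:nonck-CMO} by exploiting the inclusion $\synch_a \subseteq \asynch_a$ noted right after Definition~\ref{def.synch}. More indistinguishable sequences means less knowledge, so any ignorance that holds under the synchronous semantics also holds under the asynchronous one. Consequently, a synchronous counterexample to super-successfulness is automatically an asynchronous counterexample, provided the same witness pair $(\sigma,\tau)$ continues to establish the relevant ignorance of $\Exp_A$.

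Concretely, for $|A|\geq 4$ I would reuse the sequences from the proof of Proposition~\ref{thm:nonck-CMO}. Fix a $\CMO$-maximal sequence $\rho$ among $\{a_1,\dots,a_{n-1}\}$ and set
\[ \sigma := \rho;\, a_n a_1;\, a_n a_2;\, \dots;\, a_n a_{n-1}. \]
Then $|\sigma|=\binom{n-1}{2}+(n-1)=\binom{n}{2}$, so $\sigma$ is $\CMO$-maximal. For any $i,j$ with $1\leq i,j<n$ and $i\neq j$, define the alternative sequence
\[ \tau := \rho;\, \underbrace{a_n a_j;\,\dots;\,a_n a_j}_{i-1 \text{ times}};\, a_n a_i;\, \underbrace{a_n a_j;\,\dots;\,a_n a_j}_{n-i-1 \text{ times}}. \]
The synchronous proof established $\sigma \synch_{a_i} \tau$, hence by inclusion also $\sigma \asynch_{a_i} \tau$. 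Moreover $\tau \not\models \Exp_A$, since in $\tau$ agent $a_n$ only ever calls $a_i$ and $a_j$, so some $a_k$ with $k \notin \{i,j,n\}$ (which exists because $n\geq 4$) never learns the secret of $a_n$. It follows that $\sigma \models \neg K_{a_i} \Exp_A$ under the asynchronous semantics, so $a_i$ is not a super expert after the $\CMO$-maximal sequence $\sigma$, proving the proposition.

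The key step is simply the observation that indistinguishability grows when we weaken synchrony to asynchrony, so no new construction is required. There is no substantive obstacle beyond recognising this inheritance and checking that the previously constructed $\tau$ still fails $\Exp_A$ for the same reason as before, namely that $a_n$'s secret never reaches some bystander $a_k$.
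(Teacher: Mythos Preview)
Your argument is correct: the inclusion ${\synch_{a_i}} \subseteq {\asynch_{a_i}}$ together with $\sigma \synch_{a_i} \tau$ from the synchronous proof immediately gives $\sigma \asynch_{a_i} \tau$, and since $\tau \not\models \Exp_A$ is a purely factual statement independent of the epistemic relation, the conclusion $\sigma \not\models K_{a_i}\Exp_A$ under asynchrony follows.

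The paper, however, takes a different and in one respect stronger route. Rather than reusing the long witness $\tau$ (with its repeated calls $a_n a_j$), it exploits asynchrony directly by observing that the much shorter prefix $\rho;a_n a_i$ is already $\asynch_{a_i}$-indistinguishable from $\sigma$: agent $a_i$ is not involved in any of the calls $a_n a_k$ for $k\neq i$, so under asynchrony she simply considers it possible that none of them happened. After $\rho;a_n a_i$ only $a_n$ and $a_i$ are experts, so $\Exp_A$ fails there and hence $\sigma \not\models K_{a_i}\Exp_A$. The payoff of the paper's choice of witness is that $\rho;a_n a_i$ is itself $\CMO$-permitted, whereas your $\tau$ is not (it repeats the call $a_n a_j$). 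Consequently the paper's argument goes through unchanged for $\asynch_{a_i}^{\CMO}$ and thus also establishes that \emph{known} asynchronous $\CMO$ is not super-successful --- an extension your inclusion-based reuse of $\tau$ cannot deliver, since $\tau$ lies outside the $\CMO$-restricted relation.
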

\begin{proof}
There are counterexamples whenever $|A| \geq 4$. 

Consider again the call sequence $\rho$ and $\sigma$ from the proof of Proposition~\ref{thm:nonck-CMO}. 
The sequence $\rho;a_n a_i$ is $\CMO$-permitted, and $\sigma \asynch_{a_i} \rho;a_n a_i$. 
After $\rho;a_n a_i$, only agents $a_n$ and $a_i$ are experts but none of the remaining agents. 
Therefore, $\sigma \not\models K_{a_i} \Exp_A$, so $\sigma \not\models E\Exp_A$.
As $\sigma$ is maximal and not super-successful, $\CMO$ is not super-successful. 
\end{proof}

In fact, for the proof of Proposition~\ref{thm:async-CMO} it does not matter whether $\CMO$ is known, as we also have $\sigma \asynch^\CMO_{a_i} \rho;a_n a_i$. Hence we have the same result when $\CMO$ is known.

\begin{corollary}\label{cor:async-known-CMO}
Asynchronous known $\CMO$ is not super-successful.
\end{corollary}

\begin{example}\label{ex:cmo-sixCalls}
Consider the semantics without protocol knowledge. Let $A = \{a,b,c,d\}$ and consider the sequence $\sigma := ab;ac;bc;ad;db;dc$. 
This sequence is $\CMO$-permitted, $\CMO$-maximal, and satisfies $\Exp_A$. 

Observe that $\sigma \synch_b ab;ac;bc;ad;db;ad$, where in the call sequence on the right side we replaced the final call $dc$ in $\sigma$ by $ad$. 
This sequence is not $\CMO$-permitted, as call $ad$ occurs twice. 
After $ab;ac;bc;ad;db;ad$, agent $c$ does not know the secret of $d$, therefore $ab;ac;bc;ad;db;ad \not\models \Exp_A$. 
From that and $\sigma \synch_b ab;ac;bc;ad;db;ad$ then follows that $\sigma \not\models K_b \Exp_A$, and therefore $\sigma \not\models E\!\Exp_A$, so that $\sigma$ is not super-successful. 
\end{example}

\begin{example}
Consider again call sequence $\sigma$ from the previous Example~\ref{ex:cmo-sixCalls}. Now assume asynchrony. 
Consider the prefix $ab;ac;bc;ad$ of $\sigma$. 
Note that $\sigma \asynch_a ab;ac;bc;ad$, as $a$ is not involved in the final two calls. 
Observe that after $ab;ac;bc;ad$ agents $b$ and $c$ do not know the secret of $d$ ($ab;ac;bc;ad \models \neg S_b d \land \neg S_c d$), so that $ab;ac;bc;ad \not \models \Exp_A$. 
From that and $ab;ac;bc;ad;db;dc \asynch_a ab;ac;bc;ad$ it follows that $\sigma \not\models K_a \Exp_A$, which implies $\sigma \not\models E\!\Exp_A$, so that again $\sigma$ is not super-successful. 

We only used $\CMO$-permitted call sequences in the argument. It therefore also demonstrates that asynchronous known $\CMO$ is not super-successful (Corollary~\ref{cor:async-known-CMO}).
\end{example}
We will now show that synchronous known $\CMO$ is super-successful. 

\begin{theorem}\label{thm:syn-k-cmo-sups}
Synchronous known $\CMO$ is super-successful.
\end{theorem}
\begin{proof}
The extension of $\CMO$ consists of finite call sequences of length at most $\binom{n}{2}$. Consider a maximal $\CMO$ call sequence $\sigma$. Then we must have $|\sigma| = \binom{n}{2}$. We now use that $\CMO$ is successful, i.e., for goal $\Exp_A$~\cite{hvdetal.dynamicgossip:2019}. As there are no call sequences of length greater than $\binom{n}{2}$, and as $\CMO$ is successful, all sequences of length $\binom{n}{2}$ satisfy $\Exp_A$. As the setting is synchronous, given $\sigma$, all agents only consider call sequences of that length. Therefore, regardless of the epistemic relations, they only consider call sequences satisfying $\Exp_A$. Therefore we have $E^\CMO\Exp_A$ and $\sigma$ is super-successful.
\end{proof}

\newcommand{\overview}[2]{
  \footnotesize
  \begin{tabular}{c@{.}c@{.}c@{.}c}
    #1 \\ #2
  \end{tabular}
}

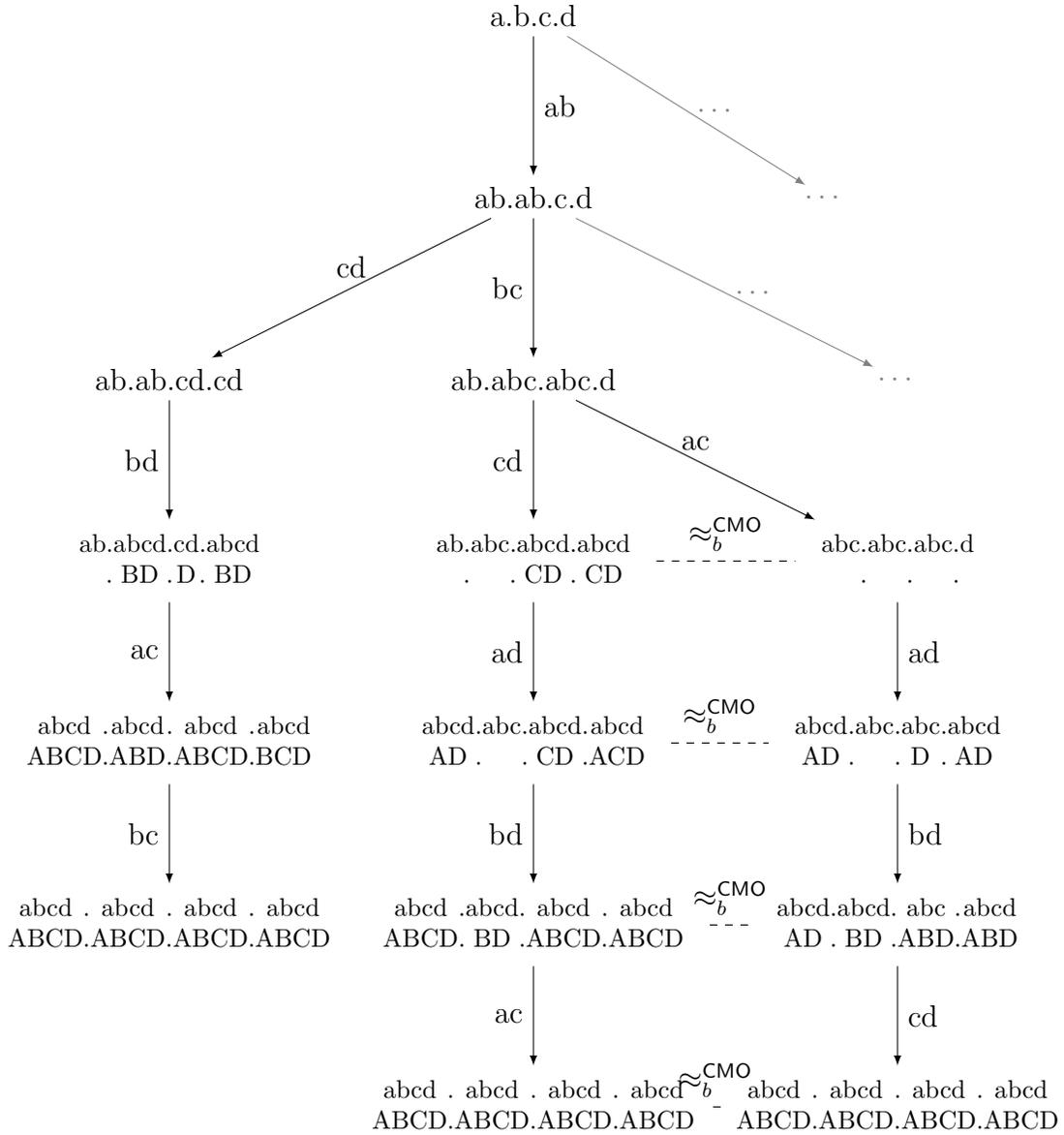
\begin{figure}
  \centering
  \begin{tikzpicture}[>=latex,node distance=2.5cm]
    \node (init) {a.b.c.d};
    \node (ab) [below of=init] {ab.ab.c.d};
    \node (o) [right of=ab, node distance=4cm,gray] {\ldots};
    \draw (init) edge [->,gray] node [right] {\ldots} (o);
    \node (ab-bc) [below of=ab] {ab.abc.abc.d};
    \node (ab-cd) [left of=ab-bc,node distance=5cm] {ab.ab.cd.cd};
    \draw (init) edge [->] node [right] {ab} (ab);
    \draw (ab) edge [->] node [above] {cd} (ab-cd);
    \draw (ab) edge [->] node [left] {bc} (ab-bc);
    \node (ab-o) [right of=ab-bc, node distance=5cm,gray] {\ldots};
    \draw (ab) edge [->,gray] node [right] {\ldots} (ab-o);
    \node (ab-cd-bd) [below of=ab-cd] {\overview{ab&abcd&cd&abcd}{ &BD&D&BD}};
    \draw (ab-cd) edge [->] node [left] {bd} (ab-cd-bd);
    \node (ab-cd-bd-ac) [below of=ab-cd-bd] {\overview{abcd&abcd&abcd&abcd}{ABCD&ABD&ABCD&BCD}};
    \draw (ab-cd-bd) edge [->] node [left] {ac} (ab-cd-bd-ac);
    \node (ab-cd-bd-ac-bc) [below of=ab-cd-bd-ac] {\overview{abcd&abcd&abcd&abcd}{ABCD&ABCD&ABCD&ABCD}};
    \draw (ab-cd-bd-ac) edge [->] node [left] {bc} (ab-cd-bd-ac-bc);
    \node (ab-bc-cd) [below of=ab-bc] {\overview{ab&abc&abcd&abcd}{& &CD&CD}};
    \draw (ab-bc) edge [->] node [left] {cd} (ab-bc-cd);
    \node (ab-bc-cd-da) [below of=ab-bc-cd] {\overview{abcd&abc&abcd&abcd}{ AD& &CD&ACD}};
    \draw (ab-bc-cd) edge [->] node [left] {ad} (ab-bc-cd-da);
    \node (ab-bc-cd-da-bd) [below of=ab-bc-cd-da] {\overview{abcd&abcd&abcd&abcd}{ ABCD&BD&ABCD&ABCD}};
    \draw (ab-bc-cd-da) edge [->] node [left] {bd} (ab-bc-cd-da-bd);
    \node (ab-bc-cd-da-bd-ac) [below of=ab-bc-cd-da-bd] {\overview{abcd&abcd&abcd&abcd}{ABCD&ABCD&ABCD&ABCD}};
    \draw (ab-bc-cd-da-bd) edge [->] node [left] {ac} (ab-bc-cd-da-bd-ac);
    \node (ab-bc-ac) [right of=ab-bc-cd,node distance=5cm] {\overview{abc&abc&abc&d}{ & & & }};
    \draw (ab-bc) edge [->] node [above] {ac} (ab-bc-ac);
    \node (ab-bc-ac-ad) [below of=ab-bc-ac] {\overview{abcd & abc  & abc & abcd}{AD & & D & AD}};
    \draw (ab-bc-ac) edge [->] node [right] {ad} (ab-bc-ac-ad);
    \node (ab-bc-ac-ad-bd) [below of=ab-bc-ac-ad] {\overview{abcd & abcd & abc & abcd}{AD &  BD & ABD & ABD}};
    \draw (ab-bc-ac-ad) edge [->] node [right] {bd} (ab-bc-ac-ad-bd);
    \node (ab-bc-ac-ad-bd-cd) [below of=ab-bc-ac-ad-bd] {\overview{abcd & abcd & abcd & abcd}{ABCD & ABCD & ABCD & ABCD}};
    \draw (ab-bc-ac-ad-bd) edge [->] node [right] {cd} (ab-bc-ac-ad-bd-cd);
    \draw (ab-bc-cd) edge [dashed] node [above] {$\synch^\CMO_b$} (ab-bc-ac);
    \draw (ab-bc-cd-da) edge [dashed] node [above] {$\synch^\CMO_b$} (ab-bc-ac-ad);
    \draw (ab-bc-cd-da-bd) edge [dashed] node [above] {$\synch^\CMO_b$} (ab-bc-ac-ad-bd);
    \draw (ab-bc-cd-da-bd-ac) edge [dashed] node [above] {$\synch^\CMO_b$} (ab-bc-ac-ad-bd-cd);
  \end{tikzpicture}
  \caption{A \textbf{partial} view of the CMO execution tree for four agents. If the first two calls are disjoint, success is (always) reached after five calls. Otherwise, it is (always) after six calls. Two other branches are suggested at depths 0 and 1 of the tree, but most other branches are not depicted. All such branches are similar to the three branches shown. In particular, after $ab;bc$ call $bd$ (or $db$) can be made, so that the same agent, $b$, occurs in the first three calls. Such a sequence therefore also succeeds after six calls.}\label{cmo.figure}
\end{figure}

\begin{example}\label{ex.goodcmo}
This example features synchronous known $\CMO$. The results in this example have been validated with the model checker GoMoChe. They are displayed in Tables~\ref{tab:fiveCalls} and~\ref{tab:sixCalls}, and in Figure~\ref{cmo.figure}.

Given four agents $a,b,c,d$, we always reach $E^\CMO\!\Exp_A$ in five calls when the first two calls have no overlap, as in the prefix $ab;cd$ of the five-call sequence $ab;cd;bd;ac;bc$ displayed in Figure~\ref{cmo.figure}. The only $\CMO$-permitted call that has not yet been made after this five-call sequence is $ad$.  

Given synchrony it is not always obvious how agents not involved in a call  learn that agents become super experts in that call. We will therefore justify in detail how this may come to pass for some agents.

For example, in third call $bd$ agent $c$ learns that $d$ becomes a super expert. This is because in the second call $cd$, agent $c$ learns that the first call was $ab$, and as $c$ is not involved in the third call, this must be one of $ab,ad,bd$ (or the dual call). As $c$ knows that $ab$ has already taken place, the third call must therefore have been between $a$ and $d$ or between $b$ and $d$. This always involves $d$, and $d$ then always becomes an expert. Therefore, $c$ knows that $d$ is an expert.

Similarly, in the fifth call $bc$, agent $d$ becomes a super expert (and in particular learns that $a$ is an expert), because $d$ knows that the two remaining $\CMO$-permitted calls were $bc$ and $ad$. As $d$ was not involved, $d$ knows that the call was $bc$.

\begin{table}[ht]
  \centering
  \begin{tabular}{l|l@{}l@{}l@{}l@{ }l@{}l@{}l@{}l|l@{}l@{}l@{}l@{ }l@{}l@{}l@{}l|l@{}l@{}l@{}l@{ }l@{}l@{}l@{}l|l@{}l@{}l@{}l@{ }l@{}l@{}l@{}l|l}
    \           & a& & &  &  & & &  &  &b& &  &   & & &  &   & &c&  &  & & &  &  & & &d &  & & &  \\
    $\call{ab}$ & a&b& &  &  & & &  & a&b& &  &   & & &  &   & &c&  &  & & &  &  & & &d &  & & &  \\
    $\call{cd}$ & a&b& &  &  & & &  & a&b& &  &   & & &  &   & &c&d  &  & & &  &  & &c&d &  & & &  \\
    $\call{bd}$ & a&b& &  &  & & &  & a&b&c&d  &   &B& &D &   & &c&d &  & & &D  & a&b&c&d &  &B& &D \\
    $\call{ac}$ & a&b&c&d & A&B&C&D  & a&b&c&d  &  A&B& &D &  a&b&c&d &  A&B&C&D & a&b&c&d &  &B&C&D \\
    $\call{bc}$ & a&b&c&d & A&B&C&D & a&b&c&d &  A&B&C&D &  a&b&c&d & A&B&C&D & a&b&c&d & A&B&C&D \\
  \end{tabular}
  \caption{The results of $ab;cd;bd;ac;bc$ with synchronous known $\CMO$.}\label{tab:fiveCalls}
\end{table}

However, if we start with overlapping calls $ab;bc$, then $E^\CMO\!\Exp_A$ is only reached after $6$ calls. 
For example, consider the sequence $ab;bc;cd;ad;bd;ca$. 
After this sequence everyone is a super expert. We show the results of this sequence in Table~\ref{tab:sixCalls}.

\begin{table}[ht]
  \centering
  \begin{tabular}{l|l@{}l@{}l@{}l@{ }l@{}l@{}l@{}l|l@{}l@{}l@{}l@{ }l@{}l@{}l@{}l|l@{}l@{}l@{}l@{ }l@{}l@{}l@{}l|l@{}l@{}l@{}l@{ }l@{}l@{}l@{}l|l}
    \           & a& & &  &  & & &  &  &b& &  &   & & &  &   & &c&  &  & & &  &  & & &d &  & & &  \\
    $\call{ab}$ & a&b& &  &  & & &  & a&b& &  &   & & &  &   & &c&  &  & & &  &  & & &d &  & & &  \\
    $\call{bc}$ & a&b& &  &  & & &  & a&b&c&  &   & & &  &  a&b&c&  &  & & &  &  & & &d &  & & &  \\
    $\call{cd}$ & a&b& &  &  & & &  & a&b&c&  &   & & &  &  a&b&c&d &  & &C&D & a&b&c&d &  & &C&D \\
    $\call{ad}$ & a&b&c&d & A& & &D & a&b&c&  &   & & &  &  a&b&c&d &  & &C&D & a&b&c&d & A& &C&D \\
    $\call{bd}$ & a&b&c&d & A&B&C&D & a&b&c&d &   &B& &D &  a&b&c&d & A&B&C&D & a&b&c&d & A&B&C&D \\
    $\call{ac}$ & a&b&c&d & A&B&C&D & a&b&c&d &  A&B&C&D &  a&b&c&d & A&B&C&D & a&b&c&d & A&B&C&D \\
  \end{tabular}
  \caption{The results of $ab;bc;cd;ad;bd;ac$ with synchronous known $\CMO$.}\label{tab:sixCalls}
\end{table}

After the five calls $ab;bc;cd;ad;bd$ agent $b$ considers $ab;bc;ac;ad;bd$ possible, after which $c$ is not an expert. But $b$ has already been in a call with each other agent, and hence $b$ is no longer $\CMO$-permitted to make calls. However, call $ac$ has not yet been made. Although agent $a$ is a super expert, call $ac$ is $\CMO$-permitted, after which the protocol terminates super-successfully.
\end{example}

If in the above Example~\ref{ex.goodcmo} the horizon of six calls has been reached, it is even \emph{common knowledge}\footnote{Common knowledge is an infinitary epistemic notion proposed in, for example,~\cite{lewis:1969,aumann:1976,halpernmoses:1990}} that all agents are experts, and thus it is common knowledge that they are super experts. However, if termination is earlier, after five calls, we are uncertain if such common knowledge is then reached. We conjecture that it is, and that this might be the case for $\CMO$ for any number $n \geq 4$ of agents, and possibly for other gossip protocols as well that do not have infinite executions. We wish to add common knowledge to the logical language in future research and investigate these matters.
The common knowledge operator was already studied in the context of gossip protocols in~\cite{AptW17}, but not for the super-successful goal.

\section{Engaged agents}\label{section:engaged}

\subsection{Syntax and semantics --- engaged agents}

We now provide a semantics wherein agents who are super experts do not make calls and do not answer calls. In principle, these are independent features that could be modelled separately, but we model them jointly. The idea behind this is utilitarian. Each agent's principal interest is to get to know all the secrets. Knowing them all, why continue to pay attention to further calls? The agent will thus be tempted to `walk away from the scene', which means no longer making and no longer answering calls. As we have already seen in the introductory section, this might prevent other agents from becoming experts. An agent valuing that everybody is an expert would first ascertain (or, in that way, ensure) that everybody else is also an expert, and only then walk away. In other words, she walks away once she is a super expert. So only then, she is no longer making and no longer answering calls. So, we model these aspects jointly.

To model that agents who are super experts do not \emph{make} calls, we need to change the definition of gossip protocol. An extra condition to be permitted to call is that the agent is not a super expert. In other words, we strengthen the protocol condition.

To model that agents who are super experts do not \emph{answer} calls, we need to change the definition of the epistemic relation, such that a call sequence cannot be extended with a call made by a super expert. In other words, we change the meaning of knowledge, as this a function of the epistemic relation, and therefore we indirectly also change the termination condition.

Agents who neither make nor answer calls are called \emph{engaged agents} (as in `engaged in other activities' for the former and as in `the line is engaged' for the latter). A call that is not answered is a \emph{missed call}.
Giving meaning to a missed call is quite common, see for example~\cite{Donner2007:Beeping} for an interview survey and~\cite{Dogar2020:MissIt} for a general ``one bit per second'' protocol solely using missed calls.
Letting the telephone ring until a connection is made is free, whereas making the connection and having a conversation, however short, is payable. Communication by missed calls is therefore free of charge. Such communication is only meaningful if there is common knowledge of the meaning of a missed call, just as in our case. (Although, unlike here, this is then typically informative for the callee, and not for the caller.) Obviously the telephone companies do not like that unintended use.

The extension involving engaged agents should be seen as a form of common knowledge of the gossip protocol. Super experts not making calls can already be taken care of in the definition of the protocol, by strengthening the calling condition, whereas super experts not answering calls require changing the epistemic relation, such that it really concerns some form of common knowledge of the gossip protocol. We present the engaged agents feature as an extension of any given protocol ``known $\prot$'', so one where $\prot$ is also common knowledge. We will call such a protocol ``\emph{protocol known $\prot$ with engaged agents}''. A semantics for engaged agents where $\prot$ is not common knowledge would be conceivable. In that case, recalling that known $\ANY$ is the same as $\ANY$, consider $\ANY$ with engaged agents, and apply $\prot$ on the set of thus permitted call sequence.

\begin{definition}[Protocol --- engaged agents]\label{def.prot3}
A protocol $\prot$ (denoted ``known $\prot$ with engaged agents'') is a program defined by
 \[ \prot := {(\Union_{a\neq b \in A} (?(\neg K_a^\prot \Exp_A \et \prot_{ab});ab))}^* ; ? E^\prot\!\Exp_A \]
where for all $a\neq b \in A$, $\prot_{ab} \in \lang$ is the \emph{protocol condition} for call $ab$ of protocol $\prot$. 
\end{definition}

This protocol definition is different from the previous Definitions~\ref{def.prot} and~\ref{def.prot2} but also different from the usual definition (e.g.,~\cite{DitmarschGKP19}):
 \[ {(?\neg \Exp_A ; \Union_{a\neq b \in A} (?\prot_{ab};ab))}^* ; ? \Exp_A \]
As our termination condition is stronger, we already replaced ``while not everyone is an expert'' by ``while not everyone is a super expert'' and the protocol becomes~Definition~\ref{def.prot3}:
\[ {(?\neg E^\prot\!\Exp_A ; \Union_{a\neq b \in A} (?\prot_{ab};ab))}^* ; ? E^\prot\!\Exp_A \]
Then, as we do not want super experts to make calls, we strengthen the protocol condition by adding $\lnot K_a^P \Exp_A$ to it:
 \[ {(?\neg E^\prot\!\Exp_A ; \Union_{a\neq b \in A} (?(\neg K_a^\prot \Exp_A\et\prot_{ab});ab))}^* ; ? E^\prot\!\Exp_A \]
Finally, as $\Et_{a \in A} K_a^\prot \Exp_A$ is $E^\prot\!\Exp_A$, it is easy to see that the same call sequences are permitted if we remove the first test on $\neg E^\prot\!\Exp_A$, which leads to the above Definition~\ref{def.prot3}. 

We continue with the changed epistemic relations. The definition of the semantic relation $\models$ remains the same. In particular, the semantics of an actually made call $ab$ is the same as the semantics of a missed call $ab$. Although no secrets are exchanged in a missed call, if the secrets had been exchanged neither agent would have learnt a new secret, as both agents must already have been experts. (So, formally, when $S = A^2$ then for all $x,y \in A$, $S^\sigma = (\{(x,y),(y,x)\} \circ S^\sigma)$, so that $S^{\sigma;ab} = S^\sigma \cup (\{(a,b),(b,a)\} \circ S^\sigma) = S^\sigma$.)

\begin{definition}[Epistemic relation --- engaged agents]\label{def.synchEngaged} \ \\
Let $a\in A$. The synchronous accessibility relation $\synch_a^\prot$ between call sequences is the smallest symmetric and transitive relation such that:
\begin{itemize}
\item $\epsilon \synch_a^\prot \epsilon$,
\item if $\sigma \synch_a^\prot \tau$,
 $a \notin \{b,c,d,e\}$,
 $\sigma \models \neg K_b^\prot \Exp_A \et \prot_{bc}$ and
 $\tau  \models \neg K_d^\prot \Exp_A \et \prot_{de}$
 then
 $\sigma;bc \synch_a^\prot \tau;de$
\item if $\sigma \synch_a^\prot \tau$,
 $I^\sigma_b = I^\tau_b$,
 $\sigma \models \neg K_a^\prot \Exp_A \et \prot_{ab}$,
 $\tau  \models \neg K_a^\prot \Exp_A \et \prot_{ab}$, and
 ($\sigma \models K_b^\prot \Exp_A$ iff $\tau \models K_b^\prot \Exp_A$),
 then
 $\sigma;ab \synch_a^\prot \tau;ab$
\item if $\sigma \synch_a^\prot \tau$,
 $I^\sigma_b = I^\tau_b$,
 $\sigma \models \neg K_b^\prot \Exp_A \et \prot_{ba}$,
 $\tau  \models \neg K_b^\prot \Exp_A \et \prot_{ba}$,
 and ($\sigma \models K_a^\prot \Exp_A$ iff $\tau \models K_a^\prot \Exp_A$),
 then
 $\sigma;ba \synch_a^\prot \tau;ba$
\end{itemize}
The asynchronous accessibility relation $\asynch_a^\prot$ between gossip states is as the relation $\synch_a^\prot$ except that the second clause is replaced by
\begin{itemize}
\item if $\sigma \asynch_a^\prot \tau$,
 $a \notin \{b,c\}$, and
 $\sigma \models \neg K_b^\prot \Exp_A \land \prot_{bc}$,
 then
 $\sigma;bc \asynch_a^\prot \tau$
\end{itemize}
\end{definition}

In the first place, the above definitions incorporate that agents no longer make calls once they are super experts. This is the part $\neg K_a^\prot \Exp_A$ in the definition of protocol, and the parts $\neg K_a^\prot \Exp_A$ and $\neg K_b^\prot \Exp_A$ in respectively the third and fourth item of Definition~\ref{def.synchEngaged} of the epistemic relation. 

In the second place, the extra conditions ``$\sigma \models K_b^\prot \Exp_A$ iff $\tau \models K_b^\prot \Exp_A$'' and ``$\sigma \models K_a^\prot \Exp_A$ iff $\tau \models K_a^\prot \Exp_A$''  in the third and fourth items of the definition of the epistemic relation, model that agents $b$ and $a$, respectively, no longer answer calls once they are super experts. 
For example, in the third item it has the effect that after a missed call $ab$, any state $\tau$ after which $ab$ is not a missed call ($b$ answers the call) is no longer considered possible by agent $a$. 
In other words, we then have that $\sigma;ab \not\synch_a \tau;ab$, so that after $\sigma;ab$ agent $b$ \emph{knows} that $ab$ was a missed call.

The properties of protocol-dependent knowledge $K_a^\prot$ listed in the previous section also hold for the semantics extended with the feature of engaged agents. In particular, on the set of all call sequences that are $\prot$-permitted and such that super experts do not make calls, the relations $\synch_a^\prot$ and $\asynch_a^\prot$ are equivalence relations. 

A special feature of the semantics with engaged calls is that calling a super expert will also make the callee a super expert:

\begin{lemma}\label{prop.missed}
  In the semantics with engaged calls, $\models K_b^\prot \Exp_A \imp [ab]K_a^\prot \Exp_A$.
\end{lemma}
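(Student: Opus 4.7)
The plan is to unfold the inductive definition of the epistemic relation and show that any call sequence $\tau$ indistinguishable from $\sigma;ab$ for agent $a$ must have the form $\tau';ab$ with $\tau' \models K_b^\prot \Exp_A$; truthfulness of protocol-dependent knowledge on $\prot$-permitted sequences, together with the monotonicity of the secret relation under calls, will then give $\tau \models \Exp_A$.

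First I would fix $\sigma$ with $\sigma \models K_b^\prot \Exp_A$ and pick an arbitrary $\tau$ with $\sigma;ab \synch_a^\prot \tau$ (if no such $\tau$ exists, $K_a^\prot \Exp_A$ holds at $\sigma;ab$ vacuously). Of the four inductive clauses of Definition~\ref{def.synchEngaged}, only the third clause can produce a related pair whose first member ends in the call $ab$: the second clause requires that $a$ be uninvolved in the last call, and the fourth clause requires the last call to be $ba$ rather than $ab$. Thus, in any chain of clause applications (closed under symmetry and transitivity) witnessing $\sigma;ab \synch_a^\prot \tau$, every intermediate sequence is of the form $\rho;ab$, and in particular $\tau = \tau';ab$.

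Second, the third clause additionally imposes the side condition that $K_b^\prot \Exp_A$ hold at the two prefixes $\sigma$ and $\rho$ simultaneously. By induction along the chain this condition propagates, so $\tau' \models K_b^\prot \Exp_A$. Because $\tau'$ lies in $a$'s equivalence class of the $\prot$-permitted sequence $\sigma$, $\tau'$ is itself $\prot$-permitted, and so the truthfulness property listed in Section~\ref{section:y} (namely $\models K_b^\prot \phi \imp \phi$ on $\prot$-permitted sequences) gives $\tau' \models \Exp_A$. Since secrets are never forgotten by a call, $S^{\tau';ab} \supseteq S^{\tau'}$, so $\tau = \tau';ab \models \Exp_A$ as well. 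The asynchronous case is entirely analogous, with the sole difference that $\tau$ may be of the form $\tau';ab;\rho$ where $\rho$ consists of calls not involving $a$; monotonicity again yields $\tau \models \Exp_A$.

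The main obstacle is the rigorous induction on the derivation of $\synch_a^\prot$ (or $\asynch_a^\prot$) through its symmetric-transitive closure: one must verify that the invariant ``the sequence ends in $ab$ and the preceding prefix satisfies $K_b^\prot \Exp_A$'' is preserved by every clause application in the chain, and, for the asynchronous setting, that interspersed applications of the non-$a$ clause (adding calls not involving $a$ to either side) leave this invariant and hence the conclusion intact.
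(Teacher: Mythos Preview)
Your proposal is correct and follows essentially the same line as the paper's proof: both hinge on the observation that the third clause of Definition~\ref{def.synchEngaged} forces any $\tau$ with $\sigma;ab \synch_a^\prot \tau$ (resp.\ $\asynch_a^\prot$) to have its last $a$-call equal to $ab$ with the preceding prefix satisfying $K_b^\prot \Exp_A$, after which truthfulness and monotonicity of secrets finish the argument. Your explicit handling of the symmetric--transitive chain and of the possible trailing non-$a$ calls in the asynchronous case is in fact slightly more careful than the paper's own sketch, which simply asserts $\tau' = \tau;ab$.
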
 
\begin{proof}
  We give the proof for the asynchronous epistemic relation. The proof is similar for the synchronous relation.
  Let $\sigma \models K_b^\prot \Exp_A$ and assume $\sigma\models \neg K_a^\prot \Exp_A \et \prot_{ab}$. Let $\tau'$ be such that $\sigma;ab \asynch_a \tau'$. Given the definition of the epistemic relation, $\tau' = \tau;ab$ for some $\tau$, and from $\sigma;ab \asynch_a^\prot \tau;ab$ we also obtain $\sigma \asynch_a^\prot \tau$.
  As $\sigma \models K_b^\prot \Exp_A$ and $\sigma \asynch_a^\prot \tau$, from the definition of the epistemic relation we obtain  $\tau \models K_b^\prot \Exp_A$, and thus also $\tau;ab \models K_b^\prot \Exp_A$. As knowledge is correct after $\prot$-permitted sequences (Section~\ref{section:ComKnowP}), also $\tau;ab \models \Exp_A$. And as $\tau$ was arbitrary such that $\sigma;ab \asynch_a^\prot \tau;ab$, we obtain $\sigma;ab \models K_a^\prot \Exp_A$ and thus $\sigma \models [ab]K_a^\prot \Exp_A$ as desired.
\end{proof}

The dual effect of this semantics for engaged calls is, that when after $\sigma$ agent $b$ answers a call from $a$, any state $\tau$ wherein agent $b$ would have been a super expert is no longer considered possible by $a$.  In particular, even when $a$ learns that $b$ already knew all secrets before the call $ab$, she learns that $b$ was not yet a super expert after $\sigma$. 
Of course, $b$ may have become a super expert in the call $ab$. 

\subsection{Results for the protocols ANY and CMO}\label{section:resultsany2}

We show that $\ANY$ is super-successful for the semantics with engaged agents, whereas $\CMO$ is now no longer super-successful. For $\PIG$ this is unknown. We also present a result for execution length.
\begin{theorem}\label{thm:anyengaged}
$\ANY$ with engaged agents is super-successful.
\end{theorem}
\begin{proof}
The proof consists of a slight adaptation of the proof of Theorem~\ref{Thm:pig}. For the protocol $\ANY$, the part $\neg K_a^\prot \Exp_A \et \prot_{ab}$ of the engaged agents protocol definition becomes $\neg K_a \Exp_A$. This is equivalent to $\M_a \Vel_{c \in A} \neg S_b c$. This is a slight weakening of $\PIG_{ab}$ which is $\M_a \Vel_{c \in A} (S_a c \et \neg S_b c) \vel (\neg S_a c \et S_b c)$. A simplification of the argument in the proof of Theorem~\ref{Thm:pig} now suffices, where (i) we do not need to consider the case $\neg S_a c \et S_b c$ given the simpler condition to be satisfied, and (ii) calls by super experts do not occur in an (engaged $\ANY$)-permitted call sequence. We can think of such disallowed calls as being removed from the assumed $\PIG$-permitted call sequences in the proof of Theorem~\ref{Thm:pig}, where we note that the calls $de$ needed to reach super-successful termination must satisfy $\M_d \neg S_e f$ (see the penultimate paragraph of the proof of Theorem~\ref{Thm:pig}), such that $d$ is therefore not a super expert and therefore $de$ is (engaged $\ANY$)-permitted.

Again, the proof holds for synchrony and asynchrony.
\end{proof}
However, it remains unclear whether known $\PIG$ with engaged agents is also super-successful.  In this case, the part $\neg K_a^\prot \Exp_A \et \prot_{ab}$ of the engaged agents protocol definition becomes $\neg K_a^\PIG \Exp_A\et \PIG_{ab}$, which is equivalent to $\M_a^\PIG \Vel_{c \in A} \neg S_b c$, that is, $\Vel_{c \in A} \M_a^\PIG  \neg S_b c$. We do not know whether $\M_a \neg S_b c$ is equivalent to $\M^\PIG_a \neg S_b c$.

\medskip

We continue with some results for asynchronous $\ANY$ demonstrating how the feature of engaged agents affects termination.

\begin{example}
Consider again Example~\ref{ex.threea} for three agents $a,b,c$ and super-successful call sequence $ab;ac;ab;cb$. With engaged agents, final call $cb$ is a missed call. The sequence remains super-successful. But we need that final call.
\end{example}

\begin{example}\label{ex.15}
Given are six agents $a,b,c,d,e,f$. We first assume asynchronous $\ANY$ without engaged agents. We enact the procedure also used in the proof of Proposition~\ref{prop.minmin}. A standard solution to obtain $\Exp_A$ is $ae;af;ab;cd;ac;bd;ae;af$. It consists of \emph{eight} calls. After any of the final \emph{four} calls $ac;bd;ae;af$, the involved agents are experts. The agents can continue to verify that all other agents are experts in subsequent calls. Altogether this requires {\bf each pair} of agents to make a call after which they both are (or remain) experts. For $6$ agents we therefore need $8-4+ \binom{6}{2} = 4+15=19$ calls. An example execution with all calls in lexicographic order is as follows.
 \[ ae;af;ab;cd;ac;bd;ae;af;ab;ad;bc;be;bf;cd;ce;cf;df;ed;ef \]
With engaged agents, a simpler sequence with $15$ instead of $19$ calls is already super-successful:
 \[ ae;af;ab;cd;ac;bd;ae;af;ab;ad;ba;ca;da;ea;fa \]
 In this sequence first $a$ becomes a super expert, in call $ad$. Then all other agents call agent $a$. These are the final five calls $ba;ca;da;ea;fa$, These are therefore all missed calls in which $b$ to $f$ also become super experts. 
\end{example}

In Proposition~\ref{prop.minmin} we showed that with $n$ agents, super-successful asynchronous $\ANY$ termination is reached in $n - 2 + \binom{n}{2}$ calls, which is of $\bigO(n^2)$ complexity. We now show (or rather observe) that asynchronous $\ANY$ with engaged agents termination is reached in $3n-4$ calls, which is of $\bigO(n)$ complexity. We conjecture that the bound $3n-4$ is minimal.

\begin{proposition}\label{prop.minmin2}
Given $n$ agents, super-successful asynchronous $\ANY$ termination with engaged agents can be achieved in $3n-4$ calls.
\end{proposition}
\begin{proof}
Select an agent $a$ among the $n$ agents. First, agent $a$ calls all other agents. These are $n-1$ calls. Then, agent $a$ calls all agents again in the same order, except the last one that was called in the first round. These are $n-2$ calls. Finally, all other agents call $a$. These are $n-1$ calls. Altogether these are $3n-4$ calls. The final $n-1$ calls are all missed calls. After a missed call the calling agent is also a super expert (Lemma~\ref{prop.missed}). All agents are then super experts: $E\Exp_A$ holds. 
\end{proof}
The proof extends the method used in Proposition~\ref{ex.threes} to show that $2n-3$ calls are enough to make a super expert. We merely add another $n-1$ calls to the $2n-3$ already made.

\begin{example} 
For four agents, the method also used the proof of Proposition~\ref{prop.minmin2} constructs a $14$-call super-successful call sequence (so, one less than in Example~\ref{ex.15} above). All calls involve $a$. First, $a$ calls everyone else, then $a$ calls everyone else except the last agent $f$, finally everyone else calls $a$, all of which are missed calls. We obtain: \[ ab;ac;ad;ae;af; \ \  ab;ac;ad;ae; \ \  ba;ca;da;ea;fa \] 
\end{example}

So far, all the news involving engaged agents seems good: speedier termination. We close with a bit of bad news. When engaged agents withdraw from the conversation this can impede dissemination of information, and even prevent that super-successful termination. We recall Theorem~\ref{thm:syn-k-cmo-sups} that synchronous known $\CMO$ is super-successful. Unfortunately, with engaged agents it is no longer super-successful.

\begin{proposition}\label{thm:ck-and-sync-CMO}
Synchronous known $\CMO$ with engaged agents is not super-successful.
\end{proposition}
\begin{proof}
The proof is by counterexample. Consider again Example~\ref{ex.goodcmo} and Table~\ref{tab:sixCalls}. Consider (prefix) sequence $ab;bc;cd;ad;bd$. After this sequence everyone but $b$ is a super expert.

Agent $b$ considers $ab;bc;ac;ad;bd$ possible (see again  Figure~\ref{cmo.figure}) after which $c$ is not an expert. But $b$ has already been in a call with each other agent, and hence $b$ is no longer permitted to make calls. On the other hand, agents $a$ and $c$ have not been in a call yet, so $ac$ and $ca$ are $\CMO$-permitted, but they are both super experts (see Table~\ref{tab:fiveCalls}) and will therefore not make a call. The protocol terminates unsuccessfully.
\end{proof}

A fortiori this holds for asynchrony, as agent $b$ then considers it possible that the call $bd$ was the last call. Therefore:
\begin{corollary}\label{cor:ck-and-sync-CMO}
Asynchronous known $\CMO$ with engaged agents is not super-successful.
\end{corollary}

If only, in Example~\ref{ex.goodcmo}, agent $b$ could be sure that after the sequence $ab;bc;ac;ad;bd$ the final call $cd$ would be made \dots \ But even though we assume synchrony, $b$ knows nothing about the interval between calls and therefore $b$ cannot become a super expert. To become an expert $b$ would have to reason as follows: \begin{quote} I am uncertain between two call sequences. An interval $x$ of time has now passed (a `clock tick'). After one sequence another call was permitted, after which all agents are experts. After the other sequence no call was permitted, but all agents already were experts. Therefore, I now know that all agents are experts.\end{quote}
By another extension of the semantics with explicit `clock ticks', using basic programs called `\Skip', we can still make $\CMO$ super-successful. This rather technical extension is   presented in Appendix B, as a further illustration how versatile and general our method is to adapt the logical semantics of epistemic gossip and to obtain PDL-style results for it. It involves changing the logical language, because we add a basic program called $\Skip$. The addition of $\Skip$ only makes sense in the presence of time, so, we must assume synchrony. Such an addition seems already of interest without the semantics with common knowledge of protocols and engaged agents, and for other protocols than $\CMO$.

\section{Conclusion and further research}\label{section:conclusion}

We explored gossip protocols wherein the termination condition is that all agents know that all agents know all secrets. Such agents are called super experts and call sequence satisfying that is called super-successful. For our results it matters whether the agents have common knowledge which gossip protocol is executed. For protocols with this epistemic goal we also investigated what happens when agents who are super experts do not make and do not answer calls. Such agents are called engaged agents. 

We investigated conditions under which such gossip protocols terminate, both in the synchronous case, where there is a global clock, and in the asynchronous case, where there is not. In particular, the protocol $\CMO$ wherein agents may only call each other once, is super-successful in the presence of a global clock. We further showed that synchronous protocols may terminate before asynchronous protocols, and that protocols with engaged agents may terminate before protocols without. 

Table~\ref{table:overview} provides an overview of our results.

\begin{table}[H]
  \centering
  \begin{tabular}{l|lll}
    \toprule
    \               & standard $\prot$        & known $\prot$                & known $\prot$ + engaged      \\
    \midrule
    $\ANY$          & YES, \cref{prop.anysuper} & YES, \cref{prop.anysuper}      & YES, \cref{thm:anyengaged}     \\[0.5em]
     $\CMO\synch$   & NO, \cref{thm:nonck-CMO}  & YES, \cref{thm:syn-k-cmo-sups} & NO, \cref{thm:ck-and-sync-CMO} \\
      $\CMO\asynch$ & NO, \cref{thm:async-CMO}  & NO, \cref{cor:async-known-CMO} & NO, \cref{cor:ck-and-sync-CMO} \\[0.5em]
    $\PIG$          & YES, \cref{Thm:pig}       & YES, \cref{thm:xxx}            & ?                            \\[0.5em]
    \bottomrule
  \end{tabular}
  \caption{Overview which protocols are super-successful under which semantics.}\label{table:overview}
\end{table}

The gap in Table~\ref{table:overview} is left for future research. Our results appear to generalize to gossip protocols with the termination condition that it is common knowledge that all agents are experts, which seems also worth to investigate later. It may finally be of interest to investigate gossip protocols with very different epistemic calling conditions.

\paragraph{Acknowledgements}
This work is loosely based on a contribution~\cite{RRvDG2020:EveryoneKnows} with a similar title and one additional author (Rasoul Ramezanian), for a volume honouring Mohammad Ardeshir at his retirement.
We thank the reviewers of previous versions of this article, and in particular the reviewer at \emph{Studia Logica} for their helpful corrections and detailed comments.

\bibliographystyle{plainurl}
\bibliography{termGossip}

\section*{Appendix A: Protocol-dependent knowledge is well-defined}\label{appendix:welldef}

We show that protocol-dependent knowledge $K^\prot_a \phi$ is well-defined. Define a relation $<$ as follows. For any call sequences $\sigma,\tau$, formulas $\phi,\psi$ and agents $a,b,c$:
\begin{enumerate}
\item $(\sigma,\phi) < (\tau,\psi)$ if $\phi$ is a subformula of $\psi$
\item $(\sigma, \prot_{ab}) < (\tau,K^\prot_c \phi)$ where $a \neq b$
\item $(\sigma,\top) < (\tau,\phi)$ where $\phi$ is not an atom
\item $(\sigma,S_a b) < (\tau,\phi)$ where $\phi$ is not an atom
\item $(\sigma,Cab) < (\tau,\phi)$ where $\phi$ is not an atom and $a \neq b$
\end{enumerate}
The relation $<$ is a well-founded partial order, with pairs (\emph{any call sequence}, \emph{any atom}) at the bottom. Recalling that $K^\prot_a \phi$ can be interpreted as $K_a (X,\phi)$ where $X = \{P_{bc} \mid b \neq c \in A \}$, clause 2.\ that $(\sigma, \prot_{ab}) < (\tau,K^\prot_c \phi)$ is already subsumed by clause 1., as $\prot_{ab}$ is then a subformula of $K^\prot_c \phi$.

We now show that the satisfaction relation $\models$ is well-defined using that relation $<$ is well-founded. We do this for the engaged agents semantics, without that it is even simpler. The proof is by structural induction. All clauses are trivial except knowledge. 

In order to determine $\sigma\models K^\prot_a \phi$, we need to determine for all $\tau$ such that $\tau \sim^\prot_a \sigma$ (where $\tau$ may be $\sigma$) that $\tau\models\phi$, as well as (for the engaged agents semantics) $\tau\models K^\prot_b\Exp_A$ or $\tau\models\neg K^\prot_b\Exp_A$ for agents $b$ possibly different from $a$.
\begin{itemize}
\item
Concerning $\tau\models\phi$, from clause $1.$ we obtain $(\tau,\phi) < (\sigma,K^\prot_a \phi)$.
\item
Concerning $\tau\models K^\prot_b\Exp_A$, this can be determined by checking that $\rho\models \Exp_A$ for any $\rho \sim^\prot_b \tau$. Determining $\rho \sim^\prot_b \tau$ introduces another obligation that will be honoured below. Now $\rho\models \Exp_A$ means that $\rho\models S_c d$ for any $c,d\in A$ (not necessarily different from $a$ or $b$). We then obtain from clause $4.$ that $(\rho,S_c d) < (\sigma,K^\prot_a \phi)$.  The case $\tau\models \neg K^\prot_b\Exp_A$ is treated similarly, first using that $(\tau,K^\prot_b\Exp_A) < (\tau, \neg K^\prot_b\Exp_A)$, by clause $1.$
\item
Concerning $\tau \sim^\prot_a \sigma$, this requires to establish $\tau' \models \prot_{cd}$ for $c,d \in A$ (where $c$ or $d$ may be $a$) and prefixes $\tau'$ of $\tau$. We now use clause $2.$ that $(\tau',\prot_{cd}) < (\sigma, K^\prot_a \phi)$. 

Similarly, concerning the novel obligation $\rho \sim^\prot_b \tau$ we need to establish $\rho' \models \prot_{cd}$ for prefixes $\rho'$ of $\rho$. Again, we use clause $2.$ to get $(\rho',\prot_{cd}) < (\sigma, K^\prot_a \phi)$. 
\end{itemize}

Note that it plays no role whether $\tau$ or $\rho$ are $\sim_a$ or $\sim_b$ related to $\sigma$ or even by some chain of such indistinguishability links. 

Further note that $\tau$ and $\rho$ may in length largely exceed $\sigma$ (and even may have $\sigma$ as a prefix themselves) given asynchrony. But this does not matter, the length of sequences does not play a role in the order (it is of some importance to observe this).

A particular case of clause $1.$ is when $\psi = [\tau]\phi$, such that for any call sequence $\tau$, $(\sigma;\tau,\phi) < (\sigma, [\tau]\phi)$.

\section*{Appendix B: Adding skip calls}\label{appendix:skip}

\subsection*{Syntax and semantics --- skip}

In this section we investigate how adding a $\Skip$ program to the language and semantics makes a difference in the termination of gossip protocols. We assume all prior enrichments of the semantics: known protocols and engaged agents. We will later see that our \Skip\ is different from the PDL-\Skip\ program defined as the test program $?\top$~\cite{hareletal:2000}. It rather is the \Skip\ featuring in some other publications on epistemic gossip~\cite{apt-tark,attamahetal.ecai:2014}, that should be seen as an explicit tick of the clock, during which no call is made. Given that it means absence of a call, such a \Skip\ program should not be named a \Skip\ {\bf call}. However, as we wish to continue to name call sequences to which \Skip\ programs have been added `call sequences', we stick to the term \Skip\ call.

We first change the program part of the BNF of the logical language to also take into account $\Skip$ calls. The relevant part of Definition~\ref{def.lang} was
\[ \begin{array}{lcl}
\pi & := & ?\phi \mid ab \mid (\pi ; \pi) \mid (\pi \union \pi) \mid \pi^*
\end{array} \]
and the new definition is:

\begin{definition}[Programs --- skip]
\[ \begin{array}{lcl}
\pi & := & ?\phi \mid \Skip \mid ab \mid (\pi ; \pi) \mid (\pi \union \pi) \mid \pi^*
\end{array} \]
where different $a,b$ range over $A$. 
\end{definition}
To allow $\Skip$ calls, we change the crucial Definition~\ref{def.prot} of protocol. Let us recall the original definition:
\[ \prot :=
  {(\Union_{a\neq b \in A} (?(\neg K_a^\prot \Exp_A \et \prot_{ab});ab))}^*;
  ? E^\prot\!\Exp_A \]
We now get:
\begin{definition}[Protocol --- skip]
\[ \begin{array}{lll} \prot &:=&
  {(\Union_{a\neq b \in A} (?(\neg K_a^\prot \Exp_A \et \prot_{ab});ab))}^* ; \\
 && ? \neg\Vel_{a \neq b \in A} (\neg K^\prot_a \Exp_A \et \prot_{ab}) ; \\
 && {(\Union_{a\neq b \in A}  (?(\neg K_a^\prot \Exp_A \et \neg\prot_{ab}); \Skip))}^* ; \\
&&  ? E^\prot\!\Exp_A \end{array} \]
where for all $a \neq b \in A$, $\prot_{ab} \in \lang$ is the protocol condition for call $ab$ of protocol $\prot$.
\end{definition}
Formula $\neg\Vel_{a \neq b \in A} (\neg K^\prot_a \Exp_A \et \prot_{ab})$ is the stop condition for the first arbitrary iteration. It is equivalent to  the more intuitive $\bigwedge_{a \neq b \in A}( \prot_{ab} \imp K^\prot_a \Exp_A)$, which we will use further below. Given its position in the program, we could replace the second arbitrary iteration ${(\Union_{a\neq b \in A}  (?(\neg K_a^\prot \Exp_A \et \neg\prot_{ab}); \Skip))}^*$ by the shorter ${(\Union_{a \in A}  (?\neg K_a^\prot \Exp_A; \Skip))}^*$ without changing the meaning of the protocol: the stop condition in the middle enforces that any agent satisfying $\neg K_a^\prot \Exp_A$ also satisfies $\neg\prot_{ab}$. We left the condition $\neg\prot_{ab}$ in place for intuitive clarity.

The second arbitrary iteration only fires if anyone satisfying the protocol condition is already a super expert, but when there still are agents who are not super experts (so that the protocol has not terminated super-successfully) but who do not satisfy the protocol condition.

We continue with the epistemic relations. Just as for the engaged agents semantics, the semantic relation $\models$ remains unchanged (Definition~\ref{Sema:Def}), we merely need to define the interpretation of program \Skip.

\begin{definition}[Epistemic relations and semantics of programs --- skip]\label{def.eprel} \ \\
Let $a\in A$. The synchronous accessibility relation $\synch_a^\prot$ between call sequences is the smallest symmetric and transitive relation satisfying all the clauses of Definition~\ref{def.synch2} plus the following two inductive clauses involving \Skip.
\begin{itemize}
 \item if $\sigma \synch_a^\prot \tau$,
 $a \notin \{b,c\}$,
 $\sigma \models \bigwedge_{d \neq e \in A}( \prot_{de} \imp K^\prot_d \Exp_A)$ and 
 $\tau  \models \neg K_b^\prot \Exp_A \et \prot_{bc}$,
 then
 $\sigma;\Skip \synch_a^\prot \tau;bc$
 \item if $\sigma \synch_a^\prot \tau$,
  $\sigma \models \bigwedge_{d \neq e \in A}( \prot_{de} \imp K_d^\prot \Exp_A)$ and
 $\tau  \models \bigwedge_{d \neq e \in A}( \prot_{de} \imp K_d^\prot \Exp_A)$,
 then
 $\sigma;\Skip \synch_a^\prot \tau;\Skip$
\end{itemize}
The asynchronous epistemic relation $\asynch_a^\prot$ is defined similarly, by adding the single clause:
\begin{itemize}
 \item if $\sigma \asynch_a^\prot \tau$ and $\sigma \models  \bigwedge_{c \neq d \in A}( \prot_{cd} \imp K^\prot_c \Exp_A)$
 then
 $\sigma;\Skip \asynch_a^\prot \tau$
\end{itemize}
To the semantics of programs (Definition~\ref{Sema:Def}) we add the interpretation of \Skip:
\[ \begin{array}{lcl}
\sigma\I{\Skip}\tau & \text{iff} & \tau = \sigma;\Skip
\end{array} \]
where $I^{\sigma;\Skip} := I^\sigma$.
\end{definition}

Note that \Skip\ calls can only occur at the postfix of a permitted call sequence. In other words, all call sequences $\sigma$ that are executions of protocols according to the \Skip\ semantics have shape $\sigma_1;\sigma_2$ where $\sigma_1$ only contains calls $ab$ for some $a,b \in A$, whereas $\sigma_2$ only contains \Skip\ calls. This also holds for infinite call sequences, i.e., an infinite call sequence may consist of calls $ab$ only, or of a finite prefix of such calls followed by an infinite postfix of \Skip\ calls.

Recalling the semantics of programs (Definition~\ref{Sema:Def}) we see that the PDL-\Skip\ defined as $?\top$ is defined as \[ \sigma\I{?\top}\tau \quad \text{iff} \quad \tau = \sigma. \] Note that this does not extend the call sequence, unlike our `clock tick' \Skip.

\emph{Skip} calls do not have factual consequences (changes of the value of atomic propositions): atoms $S_a b$ do not change value because $I^{\sigma;\Skip} = I^\sigma$, and atoms $Cab$ do not change value as \Skip\ is not a call $ab$. However, \Skip\ calls may have other informative consequences.

In the asynchronous semantics, \Skip\ calls do not have informative consequences. They go, so to speak, unnoticed. This is expressed by the following proposition.
\begin{proposition}
Assume asynchrony. Let call sequence $\sigma$ be given such that $\sigma \models  \bigwedge_{c \neq d \in A}( \prot_{cd} \imp K^\prot_c \Exp_A)$. Then $\sigma \models K_a^\prot \phi \eq [\Skip] K_a^\prot \phi$. 
\end{proposition}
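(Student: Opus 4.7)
The plan is to show that under the stated hypothesis the $\asynch_a^\prot$-equivalence classes of $\sigma$ and of $\sigma;\Skip$ coincide, so the universal quantifier in the semantics of $K_a^\prot$ ranges over the same set of call sequences at both points. Unfolding $[\Skip]$ then yields the claimed biconditional, and no induction on the structure of $\phi$ is needed.

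The key ingredient is the asynchronous $\Skip$-clause of Definition~\ref{def.eprel}: if $\sigma \asynch_a^\prot \tau$ and $\sigma \models \bigwedge_{c \neq d \in A}(\prot_{cd}\imp K_c^\prot \Exp_A)$, then $\sigma;\Skip \asynch_a^\prot \tau$. Assuming $\sigma$ is $\prot$-permitted (the non-permitted case is trivial, since $K_a^\prot\phi$ is then vacuously true both at $\sigma$ and at $\sigma;\Skip$), reflexivity of $\asynch_a^\prot$ on the permitted sequences gives $\sigma \asynch_a^\prot \sigma$. Applying the clause with $\tau = \sigma$ and our hypothesis yields $\sigma;\Skip \asynch_a^\prot \sigma$, and by symmetry $\sigma \asynch_a^\prot \sigma;\Skip$.

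Transitivity now gives, for every $\tau$, that $\sigma \asynch_a^\prot \tau$ iff $\sigma;\Skip \asynch_a^\prot \tau$: in one direction chain $\sigma;\Skip \asynch_a^\prot \sigma \asynch_a^\prot \tau$, and symmetrically in the other. Hence $\sigma \models K_a^\prot\phi$ iff $\sigma;\Skip \models K_a^\prot\phi$. Since $\sigma\I{\Skip}\tau$ holds exactly for $\tau=\sigma;\Skip$, the right-hand side is by definition $\sigma \models [\Skip]K_a^\prot\phi$, which is exactly what we want.

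The only subtle verification is reflexivity of $\asynch_a^\prot$ on $\sigma$ (and on $\sigma;\Skip$), which I expect to be the main, though routine, obstacle. It follows by induction on the length of the sequence, using the base clause $\epsilon \asynch_a^\prot \epsilon$ together with the extension clauses for calls not involving $a$, calls involving $a$, and $\Skip$ calls, and closure under symmetry and transitivity; this is essentially the same check the authors invoke after Definition~\ref{def.synch2} to assert that $\asynch_a^\prot$ is an equivalence relation on $\prot$-permitted sequences.
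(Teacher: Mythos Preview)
Your proposal is correct and follows essentially the same route as the paper: both arguments reduce the biconditional to the claim that $\{\tau \mid \sigma \asynch_a^\prot \tau\} = \{\tau \mid \sigma;\Skip \asynch_a^\prot \tau\}$, and then unfold $[\Skip]$. The paper simply asserts this set equality ``follows from Definition~\ref{def.eprel}'', whereas you spell it out via reflexivity, the asynchronous $\Skip$-clause, symmetry, and transitivity; that is exactly the verification the paper leaves implicit.
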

\begin{proof}
First note that for any $\phi$ and $\sigma$: $\sigma \models [\Skip] \phi$, iff $\tau \models \phi$ for all $\tau$ such that $\sigma\I{\Skip}\tau$, iff $\sigma;\Skip \models \phi$. 

Let now $\phi \in \lang$ and call sequence $\sigma$ such that $\sigma \models  \bigwedge_{c \neq d \in A}( \prot_{cd} \imp K^\prot_c \Exp_A)$ be given. Then: $
\sigma \models K_a^\prot \phi$, iff $\tau \models \phi$ for all $\tau \asynch_a^\prot \sigma$, iff (*) $\tau \models \phi$ for all $\tau \asynch_a^\prot \sigma;\Skip$, iff $\sigma;\Skip \models K_a^\prot \phi$, iff $\sigma \models [\Skip]K_a^\prot \phi$. Therefore $\sigma \models K_a^\prot \phi \eq [\Skip] K_a^\prot \phi$.
 
In (*) we use that if $\sigma \models  \bigwedge_{c \neq d \in A}( \prot_{cd} \imp K^\prot_c \Exp_A)$, then from Definition~\ref{def.eprel} it follows that $\tau\asynch_a^\prot\sigma$ iff $\tau\asynch_a^\prot\sigma;\Skip$. 
\end{proof}
If \Skip\ calls can take place any time we even have $\models \phi \eq [\Skip]\phi$, as suggested by Wiebe van der Hoek in the context of~\cite{apt-tark,attamahetal.ecai:2014}. However, for our semantics only permitting \Skip\ when all agents are super experts, this is false. For example, given a super-successful sequence $\sigma$ for a protocol $\prot$, we have that $\sigma \models [\Skip]K^\prot_a\bot$, as \Skip\ is not permitted after termination. On the other hand, evidently, $\sigma \not\models K^\prot_a\bot$. So, $\sigma \not\models K^\prot_a\bot \eq [\Skip] K^\prot_a\bot$.

In the synchronous semantics, \Skip\ calls may have informative consequences, as we will now see. Because the agents become aware of time, this may result in knowledge gain.

\subsection*{Results for the protocol CMO}\label{section:results-skip-cmo}

\begin{theorem}
Synchronous known $\CMO$ with engaged agents and \Skip\ is super-successful.
\end{theorem}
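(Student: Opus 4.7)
The plan is to refine the counting argument of Theorem~\ref{thm:ccc}: the \Skip\ calls will carry the synchronous clock past any point where engaged super experts refuse to participate, and once $M := \binom{n}{2}$ total time steps (real calls plus \Skip s) have elapsed every alternative any agent still considers possible will satisfy $\Exp_A$, hence $E^\CMO\Exp_A$ holds.

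The first step is a small lemma I would prove directly: if $\tau_1$ is a real-call $\CMO$-permitted (engaged-agents) sequence at which the stop condition $\bigwedge_{a\neq b\in A}(\CMO_{ab} \imp K_a^\CMO \Exp_A)$ holds, then $\tau_1 \models \Exp_A$. Fix any agent $a$: either $a$ is already a super expert, hence an expert; or $a$ has no $\CMO$-permitted call left, i.e.\ has been involved in a call with each of the other $n-1$ agents, and since each such call delivers the partner's secret $a$ knows all $n$ secrets.

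For the main argument, take an arbitrary maximal $\CMO$-permitted execution $\sigma$; by the new protocol clause it decomposes as $\sigma_1;\Skip^k$, with the stop condition holding at $\sigma_1$ and $|\sigma_1| \le M$. Pick an arbitrary agent $b$ and any $\tau$ with $\sigma_1;\Skip^{M-|\sigma_1|} \synch_b^\CMO \tau$. By reading the clauses of Definitions~\ref{def.synchEngaged} and~\ref{def.eprel} position by position, $\tau$ has total length $M$, its first $|\sigma_1|$ entries are forced to be real calls, and once a \Skip\ appears in $\tau$ no further real call may appear (the clause matching a real call to a \Skip\ in the actual sequence needs the prefix of $\tau$ to have a non-super-expert with a permitted call, which is incompatible with the stop condition required for the preceding $\Skip$--$\Skip$ match). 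Hence $\tau = \tau_1;\Skip^{M-|\tau_1|}$ with $|\sigma_1| \le |\tau_1| \le M$; moreover, when $|\tau_1| < M$ the first \Skip\ in $\tau$ forces the stop condition at $\tau_1$. If $|\tau_1| = M$, the classical gossip theorem used in Theorem~\ref{thm:ccc} gives $\tau_1 \models \Exp_A$; otherwise the lemma applies. Either way $\tau \models \Exp_A$ since \Skip\ does not alter secrets.

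It follows that $\sigma \models K_b^\CMO \Exp_A$ for every $b$, so $\sigma \models E^\CMO\Exp_A$ and the protocol terminates super-successfully within $M$ total time steps. The main obstacle I foresee is the step-by-step bookkeeping of the accessibility relation in the third paragraph, in particular confirming that real calls and \Skip s cannot interleave inside an alternative and that the stop condition is forced at the transition point; this requires an explicit case analysis of which pair of clauses of Definition~\ref{def.eprel} may fire at each position, and a verification that $|\tau_1|$ is at least $|\sigma_1|$ so that the factual state after $\tau_1$ dominates that after $\sigma_1$.
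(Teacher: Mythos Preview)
Your argument follows the same strategy as the paper's own proof: both use the synchronous horizon $M=\binom{n}{2}$ and argue that once the clock (real calls plus \Skip s) reaches $M$, every alternative any agent still entertains must satisfy $\Exp_A$. Your explicit lemma that the stop condition on a real-call prefix already forces $\Exp_A$ is a useful sharpening of a step the paper leaves implicit, and your two-case split on $|\tau_1|$ is exactly what is needed.

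One claim, however, is both unprovable and unnecessary: that ``its first $|\sigma_1|$ entries are forced to be real calls'' and hence $|\sigma_1|\le|\tau_1|$. The \Skip\ clauses of Definition~\ref{def.eprel} are taken under symmetric closure, so a real call on the actual side may be matched by a \Skip\ on the alternative side whenever the stop condition already holds there; this can occur strictly before position $|\sigma_1|$. For four agents take the actual sequence $\sigma_1=ab;bc;ac;ad;bd;cd$ of length $6=M$. Agent $b$ cannot distinguish the five-call prefix $ab;bc;ac;ad;bd$ from $ab;bc;cd;ad;bd$, and the latter satisfies the stop condition (it is precisely the sequence from Theorem~\ref{thm:ck-and-sync-CMO}). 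Hence $\sigma_1 \synch_b^\CMO ab;bc;cd;ad;bd;\Skip$, giving $|\tau_1|=5<6=|\sigma_1|$. Fortunately your case analysis already covers this: since $|\tau_1|<M$ the first \Skip\ in $\tau$ forces the stop condition at $\tau_1$, and your lemma then yields $\Exp_A$. So simply drop the inequality $|\sigma_1|\le|\tau_1|$ and the closing remark about the factual state ``dominating''; the rest of the argument stands.
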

\begin{proof}
Let $\sigma$ be a maximal $\CMO$-permitted sequence. Since $\CMO$ is successful, after executing $\sigma$ all agents are experts: $\Exp_A$ holds. If $E^\CMO\Exp_A$ now also holds, we are done. If $E^\CMO\Exp_A$ does not hold, then, since $\sigma$ is maximal, any agent who has not yet been involved in a call with some other agent, is already a super expert: $\Et_{b\neq c \in A} (\CMO_{bc} \imp K^\CMO_b \Exp_A)$. Also, since $\sigma$ is maximal but not super-successful, there is an agent $a$ who is not a super expert but who has been involved in a call with all other agents $\neg K^\CMO_a \Exp_A \et \Et_{b \in A} \neg \CMO_{ab}$.

Because $a$ is not a super expert, there is a call sequence $\tau$ such that $\sigma \synch_a \tau$ and $\tau \not \models \Exp_A$, i.e., there are $b,c \in A$ such that $\tau \not\models S_b c$. Therefore $\tau\not\models Cbc$ and $\tau \not\models Ccb$, so that $\tau \models \CMO_{bc}$. Protocol-dependent knowledge is truthful after the $\CMO$-permitted sequence $\tau$, therefore, from $\tau \not \models \Exp_A$ it also follows that $\tau \not \models K^\CMO_b \Exp_A$.

From all this it therefore follows that $\sigma;\Skip \synch_a \tau;bc$. If we now have that $\sigma;\Skip \models E^\CMO\Exp_A$, we are done. Otherwise, we repeat the procedure until the maximum number $\binom{n}{2}$ of $\CMO$-permitted calls has been reached. After that, $E^\CMO\Exp_A$ is a property of that horizon.
\end{proof}

\begin{example}
We recall Figure~\ref{cmo.figure}, Theorem~\ref{thm:syn-k-cmo-sups}, and Theorem~\ref{thm:ck-and-sync-CMO}. Synchronous known $\CMO$ is super-successful, however with engaged agents it is not.

Reconsider $\sigma = ab;bc;cd;ad;bd$ and $\tau = ab;bc;ac;ad;bd$, and recall that $\sigma \synch_{b} \tau$. After $\sigma$ all agents are experts. Agent $b$ does not know that, because $b$ considers $\tau$ possible. Call $ac$ is not $\CMO$-permitted after $\sigma$, because $a$ is a super expert. After $\tau$ agent $c$ does not know the secret of $d$ and so $cd$ is $\CMO$-permitted. We now have that $\sigma;\Skip \synch_{b} \tau;cd$ and $\sigma;\Skip \models E^{\CMO}\Exp_A$. 
\end{example}

\section*{Appendix C: GoMoChe Scripts}\label{appendix:gomoche}

Here we list the commands which can be used to verify some of the examples mentioned throughout the text with GoMoChe, the model checker for gossip protocols available at \url{https://github.com/m4lvin/GoMoChe}.
Each command below should be run in GHCi, the interactive Haskell compiler.
For more details how to compile and use GoMoChe we refer to its readme.

GoMoChe uses Dynamic Gossip~\cite{DitmarschEPRS17,hvdetal.dynamicgossip:2019} where also phone numbers are exchanged.
But here we are interested in static gossip where agents only exchange secrets.
We also assume that everyone can call everyone from the beginning.
Fortunately the second assumption means we do not have to worry about the first one.
We can simply use the \texttt{totalInit} function from GoMoChe in our queries below.

Also note that GoMoChe usually assumes synchrony.
An experimental version of the program to deal with asynchrony is available in the `\texttt{async}' branch under \url{https://github.com/m4lvin/GoMoChe/tree/async}.
Most commands below are for the synchronous version, unless \texttt{ASync} is explicitly mentioned.

\medskip

After these preliminary explanations, we now continue with the scripts verifying the examples.

\medskip

\noindent
{\bf Example~\ref{ex.threea} (page~\pageref{ex.threea}):} 

\noindent Find the shortest super-successful sequence for three agents using asynchronous ANY.

\medskip

\begin{lstlisting}
λ> let myState = (ASync, totalInit 3, [])
λ> charSequence $ head $ concat [ filter (isSuperSuccSequence anyCall myState)
                                         (sequencesUpTo anyCall myState n)
                                | n <- [1..] ]
"ab;ac;ab;bc"
\end{lstlisting}

\noindent
This uses the following function included in GoMoChe:

\medskip

\begin{lstlisting}
isSuperSuccSequence :: Protocol -> State -> Sequence -> Bool
isSuperSuccSequence proto (g,sigma) cs =
  isSuccSequence (m,g,sigma) cs &&
  (g, sigma ++ cs) |= ForallAg (`superExpert` proto)
\end{lstlisting}

\noindent
Note that \texttt{|=} is $\models$ and that \texttt{superExpert} takes a protocol as a second argument.
This is the protocol which is assumed to be common knowledge as discussed in Section~\ref{section:ComKnowP}.

\medskip

\noindent
{\bf Example~\ref{ex.fourany} (page~\pageref{ex.fourany}):} 

\noindent We check that $ab;cd;ac;ad;bc;ba;bd$ is super-successful:

\medskip

\begin{lstlisting}
λ> isSuperSuccSequence anyCall
                       (totalInit 4, [])
                       (parseSequence "ab;cd;ac;ad;bc;ba;bd")
True
\end{lstlisting}

\medskip

\noindent
Example~\ref{ex.fourany} involves generating Table~\ref{tab:sevenCalls}  (page~\pageref{tab:sevenCalls}) for the sequence $ab;cd;ac;ad;bc;ba;bd$ with $\ANY$:

\medskip

\begin{lstlisting}
λ> knowledgeOverview (totalInit 4, parseSequence "ab;cd;ac;ad;bc;ba;bd") anyCall
    a           b           c           d     
ab  ab         ab           c           d     
cd  ab         ab           cd         cd     
ac  abcd A C   ab         abcd A C     cd     
ad  abcd A CD  ab         abcd A C   abcd A CD
bc  abcd A CD  abcd  BC   abcd ABCD  abcd A CD
ba  abcd ABCD  abcd ABC   abcd ABCD  abcd A CD
bd  abcd ABCD  abcd ABCD  abcd ABCD  abcd ABCD
\end{lstlisting}

\noindent Further to Example~\ref{ex.fourany}, we also check that with asynchronous $\ANY$ all call sequences of at most $7$ calls are not super-successful.
To restrict the search space, w.l.o.g.\ we fix the first two calls to be $ab;bc$ or $ab;cd$.
Moreover, also w.l.o.g., we use a version of $\ANY$ that only allows one of the two calls $xy$ or $yx$ for each pair of agent.
Note that these queries still take multiple minutes.

\medskip

\begin{lstlisting}
λ> :set +s
λ> all (\sigma -> (ASync, totalInit 4, [(0,1),(1,2)] ++ sigma)
                  |= Neg (allSuperExperts (wlog anyCall)))
       (sequencesUpTo (wlog anyCall) (ASync, totalInit 4, [(0,1),(1,2)]) 5)
True
(169.25 secs, 735,797,981,656 bytes)
λ> all (\sigma -> (ASync, totalInit 4, [(0,1),(2,3)] ++ sigma)
                  |= Neg (allSuperExperts (wlog anyCall)))
       (sequencesUpTo (wlog anyCall) (ASync, totalInit 4, [(0,1),(2,3)]) 5)
True
(386.56 secs, 1,664,502,795,960 bytes)
\end{lstlisting}

\medskip

\noindent
{\bf Example~\ref{ex.6Calls} (page~\pageref{ex.6Calls}):} 

\noindent Search for a sequence where $a$ is a super expert after the last call, but was not even an expert just before this last call.

\medskip

\begin{lstlisting}
λ> let myState = (totalInit 4, [])
λ> let myfilter = filter (\sigma ->
     ( (totalInit 4,      sigma) |= superExpert 0 anyCall )
  && ( (totalInit 4, init sigma) |= Neg (expert 0)        ) )
λ> charSequence $ head $ myfilter (sequencesUpTo anyCall myState 6)
"ab;ac;cd;ab;bc;ab"
\end{lstlisting}

\noindent Further to Example~\ref{ex.6Calls}, we generate Table~\ref{tab:6Calls} (page~\pageref{tab:6Calls}) for the sequence $ab;ac;cd;ab;bc;ab$ with $\ANY$:

\medskip

\begin{lstlisting}
λ> knowledgeOverview (totalInit 4, parseSequence "ab;ac;cd;ab;bc;ab") anyCall
    a           b           c           d     
ab  ab         ab           c           d     
ac  abc        ab         abc           d     
cd  abc        ab         abcd   CD  abcd   CD
ab  abc    CD  abc        abcd   CD  abcd   CD
bc  abc    CD  abcd  BCD  abcd  BCD  abcd   CD
ab  abcd ABCD  abcd ABCD  abcd  BCD  abcd   CD
\end{lstlisting}

\noindent Finally, in Example~\ref{ex.6Calls} we check that there is no sequence of at most 7 calls which is super-successful in the asynchronous setting.
Again we separately consider the two cases of overlap or no overlap in the first two calls.

\medskip

\begin{lstlisting}
λ> :set +s
λ> let myState = (ASync, totalInit 4, [(0,1),(1,2)])
λ> print [ (n+2, filter (isSuperSuccSequence (wlog anyCall) myState)
                        (sequencesUpTo (wlog anyCall) myState n)) | n <- [1..5] ]
[(3,[]),(4,[]),(5,[]),(6,[]),(7,[])]
(155.12 secs, 520,606,884,592 bytes)
λ> let myState = (ASync, totalInit 4, [(0,1),(2,3)])
λ> print [ (n+2, filter (isSuperSuccSequence (wlog anyCall) myState)
                        (sequencesUpTo (wlog anyCall) myState n)) | n <- [1..5] ]
[(3,[]),(4,[]),(5,[]),(6,[]),(7,[])]
(450.96 secs, 1,292,209,208,208 bytes)
\end{lstlisting}

\medskip

\noindent
{\bf Generating Table~\ref{tab:fiveCalls} (page~\pageref{tab:fiveCalls})} for the sequence $ab;cd;bd;ac;bc$ with known CMO:

\medskip

\begin{lstlisting}
λ> knowledgeOverview (totalInit 4, parseSequence "ab;cd;bd;ac;bc") cmo
    a           b           c           d     
ab  ab         ab           c           d     
cd  ab         ab           cd         cd     
bd  ab         abcd  B D    cd    D  abcd  B D
ac  abcd ABCD  abcd AB D  abcd ABCD  abcd  BCD
bc  abcd ABCD  abcd ABCD  abcd ABCD  abcd ABCD
\end{lstlisting}

\noindent
{\bf Generating Table~\ref{tab:sixCalls} (page~\pageref{tab:sixCalls})} for the sequence $ab;bc;cd;ad;bd;ac$ with $\CMO$:

\medskip

\begin{lstlisting}
λ> knowledgeOverview (totalInit 4, parseSequence "ab;bc;cd;ad;bd;ac") cmo
    a           b           c           d     
ab  ab         ab           c           d     
bc  ab         abc        abc           d     
cd  ab         abc        abcd   CD  abcd   CD
ad  abcd A  D  abc        abcd   CD  abcd A CD
bd  abcd ABCD  abcd  B D  abcd ABCD  abcd ABCD
ac  abcd ABCD  abcd ABCD  abcd ABCD  abcd ABCD
\end{lstlisting}

\end{document}